\newtheorem{lemma}{Lemma}[section]
\newtheorem{remark}[lemma]{Remark}
\newtheorem{definition}[lemma]{Definition}
\newtheorem{proposition}[lemma]{Proposition}
\newtheorem{theorem}[lemma]{Theorem}
\newtheorem{corollary}[lemma]{Corollary}
\theoremstyle{remark}
\DeclareMathOperator*{\argmin}{argmin}
\DeclareMathOperator{\sign}{sign}
\newcommand{\real}{\mathbb{R}}
\newcommand{\E}[1]{\mathbb{E}\left[#1\right]}
\newcommand{\pr}[1]{P\left[#1\right]}
\DeclareMathOperator{\srank}{sr}
\newcommand{\scale}[1]{S_{#1}}
\newcommand{\tmpepsilon}{\hat{\epsilon}}
\newcommand{\outX}{\bar{X}}
\newcommand{\outY}{\bar{Y}}
\newcommand{\outZ}{\bar{Z}}
\newcommand{\outq}{\bar{q}}
\newcommand{\outz}{\bar{z}}
\newcommand{\cweights}{\mathcal{W}}
\newcommand{\variance}{\nu}
\newcommand{\textalg}[1]{\textnormal{\textproc{#1}}}
\newcommand{\SparseFactor}{\textalg{SparseFactorization}}
\begin{document}

\title{Non-Convex Compressed Sensing with Training Data}
\author{G. Welper\footnote{Department of Mathematics, University of Central Florida, Orlando, FL 32816, USA, email \href{mailto:gerrit.welper@ucf.edu}{\texttt{gerrit.welper@ucf.edu}}. 
}}

\date{}
\maketitle

\begin{abstract}

  Efficient algorithms for the sparse solution of under-determined linear systems $Ax = b$ are known for matrices $A$ satisfying suitable assumptions like the restricted isometry property (RIP).
  Without such assumptions little is known and without any assumptions on $A$ the problem is $NP$-hard. 
  A common approach is to replace $\ell_1$ by $\ell_p$ minimization for $0 < p < 1$, which is no longer convex and typically requires some form of local initial values for provably convergent algorithms. 

  In this paper, we consider an alternative, where instead of suitable initial values we are provided with extra training problems $Ax = B_l$, $l=1, \dots, p$ that are related to our compressed sensing problem. They allow us to find the solution of the original problem $Ax = b$ with high probability in the range of a one layer linear neural network with comparatively few assumptions on the matrix $A$.

\end{abstract}

\smallskip
\noindent \textbf{Keywords:} compressed sensing, learning, neural networks, $NP$-hard

\smallskip
\noindent \textbf{AMS subject classifications:} 94A12, 68Q32

\section{Introduction}

We consider the compressed sensing problem to find the sparsest solution of an under-determined linear system: For $A \in \real^{m \times n}$ with $m < n$ and $b \in \real^m$, solve
\begin{equation}
  \begin{aligned}
    & \min_{x \in \real^n} \|x\|_0 & & \text{subject to} & A x & = b,
  \end{aligned}
  \label{eq:cs0}
\end{equation}
where $\|x\|_0$ denotes the number of non-zero entries of $x$. This problem is $NP$-hard in general \cite{Natarajan1995}, but can be solved by the convex optimization problem
\begin{equation*}
  \begin{aligned}
    & \min_{x \in \real^n} \|x\|_1 & & \text{subject to} & A x & = b
  \end{aligned}
\end{equation*}
under additional assumptions. The most common is the \emph{$(s,\epsilon)$-Restricted Isometry property (RIP)}
\begin{equation*}
  \begin{aligned}
    (1-\epsilon) \|x\| & \le \|Ax\| \le (1+\epsilon)\|x\| &
    & \text{for all $s$-sparse }x \in \real^n,
  \end{aligned}
\end{equation*}
with fairly strict requirements $\epsilon < 4/\sqrt{41} \approx 0.6246$ on the RIP parameters \cite{CandesRombergTao2006a,Donoho2006,CandesRombergTao2006,FoucartRauhut2013}. 

However, sparsity is also of interest in many applications that violate the RIP. This is addressed by several papers that aim for weaker assumptions on the sensing matrix. A common approach \cite{CandesWakinBoyd2008,ChartrandStaneva2008,FoucartLai2009,Sun2012,ShenLi2012} considers the optimization of $\ell_p$-norms with $0 < p < 1$, which resemble the $\ell_0$-norm more closely than the $\ell_1$-norm, but renders the optimization problem non-convex and again $NP$-hard in the worst case \cite{GeJiangYe2011}. In addition, several iterative algorithms \cite{CandesWakinBoyd2008,ChartrandWotaoYin2008,FoucartLai2009,DaubechiesDeVoreFornasierEtAl2010,LaiXuYin2013,WoodworthChartrand2016}, mostly variations of reweighted least squares methods, show promising behaviour for non-RIP matrices.

Partial theoretical results are available for the above methods, but since the problems are non-convex, they typically require some form of good initial values. In this paper, we consider an alternative perspective. Instead of approaching the compressed sensing problem \eqref{eq:cs0} directly, we assume that we are provided a set of related training problems $Ax = B_l$, $l = 1, \dots, q$, containing sufficiently many ``easy'' problems in the sense that they can be solved by $\ell_1$-minimization. We show that after training on these simpler problems, we can also solve the compressed sensing problem for difficult right hand sides $b$, for which $\ell_1$-minimization fails and currently no alternative algorithms with guaranteed sparse recovery are available. The approach is loosely related to a human proving a theorem. This problem is generally $NP$-hard and a direct approach may not be successful. However, building up to the result by first addressing a series of simpler but related training examples may provide enough insight to succeed.

The training samples $B_l$ are used to train a one-layer linear neural network to represent the solution $x$. In \cite{Welper2020}, a similar, but untrained, network is used as a relaxation technique to ease $\ell_p$ optimization. The extra layer introduces new parameters and is motivated by over-parametrization with additional regularization in neural network training, where empirical and theoretical evidence shows that this structure is beneficial for network training. See Section \ref{sec:compare-nn} for more details and references. Other examples that demonstrate how architectural choices can improve optimization are LSTM units \cite{HochreiterSchmidhuber1997} to improve vanishing/exploding gradients or skip connections \cite{HeZhangRenEtAl2016} for very deep networks.

In recent years, several connections have been established between compressed sensing and neural networks. The papers \cite{MardaniSunDonohoEtAl2018,ShiJiangZhangEtAl2017} implement the entire data to solution map $b \to x$ of compressed sensing by neural networks. In addition, there is a growing literature on solving under-determined linear systems $Ax = b$ with the prior assumption that the solution $x = G(z;w)$ is in the range of a generative neural network. The weights $w$ can be pre-trained on relevant data-sets and then the least squares loss $\min_z \|AG(z;w) - b\|_2$, or variants thereof, is minimized \cite{BoraJalalPriceEtAl2017,HandVoroninski2018,HuangHandHeckelEtAl2018,DharGroverErmon2018,WuRoscaLillicrap2019}. Alternatively, the papers \cite{VeenJalalSoltanolkotabiEtAl2020,JagatapHegde2019,HeckelSoltanolkotabi2020} use the deep image prior \cite{UlyanovVedaldiLempitsky2020} that $x$ is in the range of an untrained neural network and optimize $\min_w \|AG(z;w) - b\|_2$ for some latent variable $z$. These works replace sparsity by better prior information but keep common properties of the sensing matrix $A$, as i.i.d. Gaussian or variants of the Restricted Eigenvalue Condition (REC). Although using a similar structure, in this paper, we keep the sparsity prior but are interested in more general classes of sensing matrices for which no provable tractable sparse recovery algorithms seems to be available. 

The paper is organized as follows. In Section \ref{sec:setup}, we motivate the algorithms and consider some heuristic arguments for the sparse recovery results. In Section \ref{sec:compare-nn}, we compare the approach to neural networks and in Section \ref{sec:main}, we state the main recovery results, which are proven in Section \ref{sec:proof-main}.

\section{Problem Setup and Algorithms}
\label{sec:setup}

\paragraph{Data Model}

We want to solve a compressed sensing problem with input $b$, that is inaccessible to $\ell_1$-minimization or matching pursuit but supported by extra training samples $B_l$. These training samples must be somehow related to the main input $b$ to be helpful, which we establish by a minimalistic data model.

For a motivation, let us reconsider a human mathematics student from the introduction. The input $b$ may be compared to a final project, take-home exam or thesis. These are too advanced to be addressed at the beginning of a course and if assigned too early, the student would usually fail. Instead, she/he is first assigned homework problems, similar to $B_l$, which already contain components, techniques and tricks of the advanced problems but in a more isolated form or easier composition. Hence, they are intended to be solvable without much prior exposure to the subject and shall enable her/him to address the bigger problems later in her/his course of study.

We construct a crude analogy as follows: A proof corresponds to a solution vector $x \in \real^n$ of a compressed sensing problem with right hand side $b=Ax$ (both are $NP$-hard to find in the worst case). Each proof component can be considered as a proof in its own right, so we also represent them by column vectors $X_k \in \real^n$, combined into a \emph{component} matrix $X \in \real^{n \times p}$. In order to account for the idea that they are simpler or shorter than a full proof, we require them to be sparse. Finally, we combine these ``components'' into a full ``proof'' by a linear combination $x = X z$ with a \emph{combinator} vector $z \in \real^p$. For example a binary $z$ would amount to a (unordered) selection of components. This still has combinatorially many possibilities just as proving a big theorem form solid background knowledge is sill difficult. The training samples $B_l = XZ_l$ are generated by the same model but simpler in the sense that they can be addressed without prior knowledge by regular $\ell_1$ minimization, as discussed below.

Throughout this paper, the combinator vector $z$ is deterministic and $t/2$-sparse, allowing only limited length ``proofs'', and the component matrix $X$ is chosen randomly by a Bernoulli-Subgaussian model with expected sparsity $s$. 

\begin{definition}
  \label{def:bernoulli-subgaussian}

  We say that $X \in \real^{n \times p}$ satisfies the \emph{Bernoulli-Subgaussian model with parameter $s/n$} if $X_{jk} = \Omega_{jk} R_{jk}$, where $\Omega$ is an i.i.d. Bernoulli matrix and $R$ is an i.i.d. Subgaussian matrix with 
  \begin{align}
    \label{eq:bernoulli-subgaussian}
    \E{\Omega_{jk}} & = \frac{s}{n}, & 
    \E{R_{jk}} & = 0, & 
    \E{R_{jk}^2} & = \variance^2, & 
    \|R_{jk}\|_{\psi_2} & \le \variance K.
  \end{align}

\end{definition}

Recall that the $\psi_2$ norm is defined by $\|X\|_{\psi_2} := \sup_{p \ge 1} p^{-1/2} \E{|X|^p}^{1/p}$. For the main results, we need the following additional restrictions on this model.

\begin{definition}
  \label{def:restricted-bernoulli-subgaussian}

  We say that $X \in \real^{n \times p}$ satisfies the \emph{restricted Bernoulli-Subgaussian model with parameter $s/n$} if it satisfies the Bernoulli-Subgaussian model in Definition \ref{def:bernoulli-subgaussian}, with symmetric $R_{jk}$ and 
  \begin{align}
    \label{eq:restricted-bernoulli-subgaussian}
    \pr{R_{jk} = 0} & = 0, &
    \E{|R_{jk}|} & \in \left[ \frac{1}{10}, 1\right], &
    \E{R_{jk}^2} & \le 1, &
    \pr{|R_{jk}| > \tau} & \le 2 e^{ \frac{-\tau^2}{2}}.
  \end{align}

\end{definition}

Note that the last inequality implies the $\psi_2$-norm bound in \eqref{eq:bernoulli-subgaussian} with extra restrictions on the constant.

\paragraph{Sparse Recovery Given the Candidate Matrix}

Let us assume for the time being that we already know the component matrix $X$. Since we assume that the combinator $z$ is $t/2$-sparse, we search for it by the standard $\ell_1$-minimization problem 
\begin{equation}
  \begin{aligned}
    & \min_{z \in \real^p} \|z\|_1 & & \text{subject to} & AXz & = b,
  \end{aligned}
  \label{eq:csX}
\end{equation}
with modified sensing matrix $AX$. The extra matrix $X$ induces sufficient randomness to ensure that $AX$ satisfies the RIP with high probability, even if $A$ does not. This is proven in \cite{KasiviswanathanRudelson2019} and stated rigorously in Section \ref{sec:product-RIP} below. The recovery scheme is summarized in Algorithm \ref{alg:train} taking into account that we will not know the correct scaling of $X$, which is compensated by the diagonal scaling matrix
\begin{equation}
  \scale{X} := \operatorname{diag} \left( \|X_1\|_2^{-1}, \dots, \|X_p\|_2^{-1} \right).
  \label{eq:scaling-matrix}
\end{equation}
where again $X_k$ denotes the $k$-th column of $X$.

\begin{algorithm}
  \begin{algorithmic}
    \Function{SparseRecovery}{b, $\outX$}
      \State Let $\scale{\outX}$ be the scaling matrix defined in \eqref{eq:scaling-matrix}.
      \State Compute the solution $z$ of
      \begin{equation*}
        \begin{aligned}
	  & \min_z \|z\|_1 & & \text{subject to} & \frac{\sqrt{n}}{\|A\|_F} A \outX \scale{\outX} z = \frac{\sqrt{n}}{\|A\|_F} b
        \end{aligned}
      \end{equation*}
      \State \Return $\outX \scale{\outX} z$.
    \EndFunction
  \end{algorithmic}
  \caption{Sparse recovery with prior knowledge $\outX$.}
  \label{alg:sparse-recovery}
\end{algorithm}

\paragraph{Learning the Component Matrix}

We do not assume a-priori knowledge of the component matrix $X$, but rather recover it from training problems generated from the same data model $B_l = A X Z_l$ for $l = 1, \dots, q$ or short in matrix form $B = A X Z \in \real^{m \times q}$. In contrast to the right hand side $b$, we assume that sufficiently many of these problems are easy in the sense that they can be solved by $\ell_1$-minimization. In the student analogy above, these problems are easy enough for her to solve without prior training. 

The possibility of $\ell_1$ recovery for training problems may be by mere coincidence but can also be ensured systematically as follows: The solution $x = X z$ for right hand side $b$ is expected to be $st/2$-sparse. On the other hand, we may choose training samples $Z_l$ with more sparsity $\bar{t}/2 \ll t/2$ leading to $s\bar{t}/2$ sparse solutions $X Z_l$. These can be recovered for $s\bar{t}$-RIP matrices $A$, without necessarily being able to recover $x$ as well.

To recover $X$ form the training samples $B$, we first solve the $\ell_1$-minimization problems
\begin{equation*}
  \begin{aligned}
    & \min_{Y_l} \|Y_l\|_1 & & \text{subject to} & A Y_l = B_l
  \end{aligned}
\end{equation*}
and combine the columns $Y_l$ into a matrix $Y \in \real^{n \times q}$. Since we do not assume that every training sample is easy, we filter out the columns for which the $\ell_1$-recovery failed. To this end, we require one major assumption on the sensing matrix $A$: For some $u \ge st$ all $u$-sparse solutions of the system $Ax=B_l$ are unique, see e.g. \cite{FoucartRauhut2013}. Hence, if $Y_l$ is $st/2$ sparse, it must match the $st/2$ sparse vector $X Z_l$, providing an a-posteriori test for correctness. Filtering out the sparse columns of $Y$, we obtain
\[
  \outY := [Y_l : \,  \|Y_l\|_0 \le u, \, l = 1, \dots, q]
\]
with $\outY = X Z_L \in \real^{n \times \outq}$ for a corresponding sub-matrix $Z_L$ of $Z$. Now we can rigorously state what is meant by ``easy training samples'': The matrix $\outY$ must have rank at least $p$, i.e. sufficiently may independent samples are recovered correctly by $\ell_1$-minimization.

Note that we assume uniqueness of $u$-sparse solutions with $st \le u$ instead of simply $u = st/2$ to account for the randomness in the sparsity of $X$. This assumption requires sub-blocks with $2u$ columns to have full rank \cite{FoucartRauhut2013}, which is weaker than the corresponding $(\epsilon, 2u)$-RIP condition, which would ensure that the sparse solutions can be recovered by $\ell_1$-minimization and requires fairly strict bounds on the singular values of these sub-blocks.

Finally, we recover $X$ (and $Z_L$) from the known product $\outY = X Z_L$ by sparse matrix factorization. This is a well known problem in several contexts. For example in the transpose equation $\outY^T = Z_L^T X^T$, the columns of $Z_L^T$ can be interpreted as a dictionary, the rows of $X^T$ as random sparse coefficient vectors and the matrix $\outY$ as observations. The paper \cite{SpielmanWangWright2012} provides an algorithm to recover the dictionary $Z_L^T$ and coefficients $X^T$ up to scaling and permutation that implements the following method
\begin{definition}
  \label{def:sparse-factorization}
  Let $X \in \real^{n \times p}$ be restricted Bernoulli-Subgaussian \eqref{eq:restricted-bernoulli-subgaussian} and $Z \in \real^{p \times q}$, $p \le q$ be of full rank. Given $Y = XZ$, the algorithm 
  \[
    \bar{X},\,\bar{Z} = \SparseFactor(Y)
  \]
  returns two matrices $\bar{X} = X P \Gamma$ and $\bar{Z} = \Gamma^{-1} P^{-1} Z$, which match $X$ and $Z$ up to invertible diagonal scaling $\Gamma$ and permutation $P$, with probability at least $1 - C p^{-c}$ for some $c,C > 0$, independent of sparsity, dimensions and the probability model.
\end{definition}
Section \ref{sec:sparse-factorization} contains the algorithm, convergence guarantees and an overview over the literature. Algorithm \ref{alg:train} contains the full algorithm to recover the component matrix $X$ from samples $B$ and sparsity $u$.

\begin{remark}
  \label{remark:no-recovery-of-X}
  The exact reconstruction of $X$ up to scaling and permutation is convenient for the theory below, but not strictly necessary. For application in Algorithm \ref{alg:sparse-recovery}, we would merely need to find a matrix $\tilde{X}$ such that
  \begin{itemize}

    \item $A \tilde{X}$ satisfies the RIP.

    \item For every sparse $z$ there is a sparse $\tilde{z}$ with $Xz = \tilde{X}\tilde{z}$.
    
  \end{itemize}
  For example, keeping some sub-optimal $\ell_1$ reconstructed columns would be permissible as long as they don't disrupt the RIP property of $A \tilde{X}$.
\end{remark}

\begin{algorithm}
  \begin{algorithmic}
    \Function{Train}{$u$, $B$}
      \State For all $l = 1, \dots, q$, 
      \begin{equation*}
        \begin{aligned}
	  & Y_l = \argmin_y \|y\|_1 & & \text{subject to} & Ay & = B_l.
        \end{aligned}
      \end{equation*}
      \State Let $\outY \in \real^{p \times \outq}$ be the matrix with columns $Y_l$ for all $l$ with $\|Y_l\|_0 \le u$.
      \State \Return $\outX, \, \outZ = \SparseFactor(\outY)$
    \EndFunction
  \end{algorithmic}
  \caption{Training for maximal sparsity $u$ and samples $B \in \real^{m \times q}$.}
  \label{alg:train}
\end{algorithm}

\paragraph{Unknown Sparsity}

The above algorithm has the disadvantage that the sparsity $u$ must be provided as input. A crude remedy is to repeat the training Algorithm \ref{alg:train} and recovery Algorithm \ref{alg:sparse-recovery} for every possible $u \in \{1, \dots, n\}$. One of these reproduces the correct solution $x$ with high probability. By assumption, sparse solutions of $Ax = b$ are unique and therefore, the sparsest candidate must be the correct one. This crude approach adds a factor of $n$ to the overall runtime but still provides a polynomial time algorithm for a compressed sensing problem without a sufficiently strong RIP condition for $\ell_1$-recovery.

\begin{algorithm}
  \begin{algorithmic}
    \Function{TrainAndRecover}{$b$, $B$}
      \For{$u = 1,\dots,n$}
        \State $\outX, \outZ = \textalg{Train}(u, \, B)$
	\State $x_u = \textalg{SparseRecovery}(b, \outX)$
      \EndFor
    \EndFunction
    \State \Return $\argmin_{u = 1, \dots, n} \|x_u\|_0$
  \end{algorithmic}
  \caption{Sparse recovery for data $b$, given extra samples $B \in \real^{m \times q}$.}
  \label{alg:train-and-recover}
\end{algorithm}

\paragraph{Summary}

Compressed sensing algorithms for non-RIP matrices, or more accurately no null space property, typically rely on $\ell_p$-minimization with $0 < p < 1$, which more closely resembles the ideal $\ell_0$-norm, but renders the recovery problem non-convex. As a consequence, analytical guarantees for recovery algorithms typically require some form of locality to ensure convergence to the correct solution, although they may perform better in practice. Such a locality is not required for the analysis of the methods presented in this section. Instead, recovery guarantees for the polynomial time algorithms are based on ``easy'' and related training samples.

\section{Comparison with Neural Networks}
\label{sec:compare-nn}

In this section, we draw some parallels to neural network training. These have first lead to variants of Algorithm \ref{alg:sparse-recovery} in \cite{Welper2020} and are a major motivation for the results in this paper. To this end, let $h \in \real^{1 \times n}$ and $g \in \real^{1 \times m}$ be two consecutive hidden layers of a neural network, connected by a weight matrix $W \in \real^{n \times m}$ (including the bias for simplicity) and element-wise activation function $\sigma$:
\[
  g = \sigma(h W).
\]
We write the layers as row vectors to better highlight the similarities with the setup in this paper. In \eqref{eq:csX}, we split the solution $x$ of a compressed sensing problem into a product $Xz$ of a component matrix $X$ and a combinator $z$. In order to train the weight matrix $W$, we can apply the exact same strategy to each column of $W$, or equivalently to each neuron, $W_l = \cweights_l V_l$. In order to be more compliant with standard neural network architectures, we use a slight modification: Instead of assigning a component matrix for each neuron, let $\cweights \in \real^{n \times pm}$ be a wide matrix whose columns form a pool of components for all neurons of one layer. Then, for each neuron we use one combinator vector $V_l \in \real^{pm \times 1}$ to choose from this pool leading to the extended layer
\[
  g = \sigma(h \cweights V)
\]
with $V = [V_1, \dots, V_m]$. Effectively, we have ``over-parametrized'' the network by widening the layer $W$ to $\cweights$ and introducing a new linear layer $V$.

This setup is analyzed in \cite{Welper2020}, where $\cweights$ is interpreted as a collection of random initial guesses of a network optimization and $V$ is optimized to choose the best combination of the initial components by a relaxation argument. In an analogous compressed sensing problem, \cite{Welper2020} shows that this leads to an exponential speedup. Nonetheless, the overall performance of this approach is dominated by the negligible likelihood to contain components of the correct solution in the initial guess $\cweights$. This problem is overcome in this paper by learning the weights $\cweights$ from training samples, while keeping the idea for the $V$ optimization intact (with some slight simplifications).

Despite the comparison to compressed sensing, it is not clear if a simple added linear layer in a neural network does provide any meaningful effect on its optimization. For example, in \cite{AroraCohenGolowichEtAl2019} it is shown that the optimization of fully linear networks is comparable to the optimization of least squares problems with all layers squashed together. Let us therefore consider some more parallels between the two areas.

\begin{itemize}

  \item In the compressed sensing problem, the split-up of the solutions of $[b,B] = A [x, Y]$ into the matrix product $[x,Y] = X [z,Z]$ induces some redundancy or ``over-parametrization'', which is counterbalanced with the requirement that both $X$ and $[z,Z]$ are sparse. A similar setup is often found in neural networks: They are also over-parametrized and then regularized to compensate, which aids training both empirically \cite{GoodfellowVinyals2015,ZhangBengioHardtEtAl2017} and theoretically \cite{SoudryCarmon2016,SafranShamir2018,LiLiang2018,Allen-ZhuLiSong2019,DuLeeLiEtAl2019,AroraDuHuEtAl2019,OymakSoltanolkotabi2020,Allen-ZhuLi2020}. For the regularization, $\ell_1$ penalties are one option \cite{GoodfellowBengioCourville2016,{WenWuWangEtAl2016}} and also more popular dropout regularization tends to promote sparsity \cite{SrivastavaHintonKrizhevskyEtAl2014,MolchanovAshukhaVetrov2017}. In addition, sparse weight matrices are generated by pruning in network compression, see e.g. \cite{HanPoolTranEtAl2015,HanMaoDally2015} or \cite{Neill2020} for an overview. Finally, our added layer is missing non-linear activation functions, which can also deactivate entire neurons, e.g., an added $\operatorname{ReLU}$ unit for our extra layer $\sigma( \operatorname{ReLU}(h \cweights) V)$ is equivalent to setting entries of $V$ to zero in $h$ and $\cweights$ dependent locations.

  \item The training protocol for neural networks is different from compressed sensing. The component matrix $X$ corresponds to $\cweights$ and the combinators $[z,\outZ]$ combined correspond to $V$,  all trained by one single gradient descent optimization. In particular, note that $z$ computed by Algorithm \ref{alg:sparse-recovery} corresponds to a network weight or neuron and not a forward run of the network. If an analogous gradient descent training with $\ell_1$ penalties would work for our layered compressed sensing setup is unknown, but Remark \ref{remark:no-recovery-of-X} indicates that the algorithmic framework of this paper is likely more strict than necessary.

  \item As for neural networks, the training samples are presented to the algorithms in bulk, without any separation in ``simple'' and ``hard'' problems. If there is an analogue of ``simple'' problems for neural networks is unknown, but good quality training data is important for successful neural network training, as well, see e.g. the discussion in \cite{Allen-ZhuLi2020}. 

  \item Given the component matrix $X$, there exists an efficient algorithm (Algorithm \ref{alg:sparse-recovery}) to determine the remaining vector $z$ to solve compressed sensing problems, with high probability. If it is possible to successfully train $V$ given $\cweights$ is much less clear, but some loose analogies do exist. The matrix $\cweights$ is very wide and random, either from initialization, or training on data sets generated by a random data model, corresponding to $X$ and $\tilde{X}$ in Remark \ref{remark:no-recovery-of-X}. Such ingredients are used, e.g., in \cite{Allen-ZhuLiSong2019,DuLeeLiEtAl2019} to show convergence of gradient descent to global minima, although in a different setup.

\end{itemize}

To what extend the above analogies hold up to scrutiny is left for future research. Anyways, the provided compressed sensing setup is a potential toy model to explain some aspects of neural network training. Conversely, neural networks have motivated the simple data model in Section \ref{sec:setup} and may provide ideas for more complex ones in the future.

\section{Main Result}
\label{sec:main}

In this section, we prove the observations we made during the construction of the algorithms. Throughout this article, $c$ and $C$ are non-negative generic constants that can change in every term but are independent of the dimensions, sparsity and the Bernoulli-Gaussian model.

We first verify that Algorithm \ref{alg:sparse-recovery} recovers the correct combinator vector $z$ given the component matrix $\outX$ provided by Algorithm \ref{alg:train}. 

\begin{proposition}
  \label{prop:sparse-recovery}
  Assume that $x$ is generated by the data model $x = X z$ for some $t/2$-sparse combinator vector $z\in \real^p$ and component matrix $X$ for which $\frac{\sqrt{n}}{\|A\|_F} A X S_X$ is a $(t, \epsilon)-RIP$ matrix, with scaling $S_X$ defined in \eqref{eq:scaling-matrix} and $\epsilon < 4/\sqrt{41} \approx 0.6246$. Let $b = Ax$ and $\outX = XP\Gamma$ be a scaled permutation of $X$ for some permutation $P$ and signed scaling matrix $\Gamma$ (i.e. the output of Algorithm \ref{alg:train}). Then, $\textalg{SparseRecovery}(b, \outX) = X z = x$, i.e. Algorithm \ref{alg:sparse-recovery} reproduces $x$.
\end{proposition}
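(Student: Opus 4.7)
The plan is to reduce the $\ell_1$-minimization executed by \textalg{SparseRecovery} to a standard sparse recovery problem on an RIP matrix, and then verify that the recovered vector maps back to $x$. The bookkeeping of the scaling and signed permutation is the only thing to watch; once it is set up, the recovery is just an invocation of a Cai-Zhang / Candès-type theorem.

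First, I would compute $\outX \scale{\outX}$ explicitly. Writing $\outX = X P \Gamma$, the $k$-th column of $\outX$ is $\Gamma_{kk} X_{\pi(k)}$ for the permutation $\pi$ associated with $P$, so $\|\outX_k\|_2 = |\Gamma_{kk}| \, \|X_{\pi(k)}\|_2$ and hence the $k$-th column of $\outX \scale{\outX}$ is $\sign(\Gamma_{kk}) X_{\pi(k)} / \|X_{\pi(k)}\|_2$. In matrix form,
$$\outX \scale{\outX} \;=\; X \scale{X} Q, \qquad Q := P \, \sign(\Gamma),$$
where $Q$ is a signed permutation matrix; in particular $Q$ is orthogonal, $\|Q w\|_2 = \|w\|_2$, and $\|Qw\|_0 = \|w\|_0$ for every $w$.

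Second, the RIP assumption transfers from $\frac{\sqrt{n}}{\|A\|_F} A X \scale{X}$ to $\frac{\sqrt{n}}{\|A\|_F} A \outX \scale{\outX} = \frac{\sqrt{n}}{\|A\|_F} A X \scale{X} Q$: applying the RIP inequality to $Q w$ in place of $w$ and using the two invariance properties above shows that the product matrix is $(t,\epsilon)$-RIP with exactly the same constant $\epsilon < 4/\sqrt{41}$. Third, I would exhibit a $t/2$-sparse feasible point for the $\ell_1$-minimization by setting $\tilde{z} := Q^{-1} \scale{X}^{-1} z$. Because $\scale{X}^{-1}$ is diagonal and $Q^{-1}$ is a signed permutation, $\|\tilde{z}\|_0 = \|z\|_0 \le t/2$, and a direct substitution yields
$$\tfrac{\sqrt{n}}{\|A\|_F} A \outX \scale{\outX} \tilde{z} \;=\; \tfrac{\sqrt{n}}{\|A\|_F} A X \scale{X} Q Q^{-1} \scale{X}^{-1} z \;=\; \tfrac{\sqrt{n}}{\|A\|_F} A X z \;=\; \tfrac{\sqrt{n}}{\|A\|_F} b.$$

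Fourth, invoking the Cai-Zhang $\ell_1$-recovery theorem quoted in the introduction (which, for a $(t,\epsilon)$-RIP matrix with $\epsilon < 4/\sqrt{41}$, guarantees exact $\ell_1$-recovery of every $t/2$-sparse vector from its measurements) concludes that the $\ell_1$-minimizer computed inside \textalg{SparseRecovery} equals $\tilde{z}$. The algorithm therefore returns
$$\outX \scale{\outX} \tilde{z} \;=\; X \scale{X} Q Q^{-1} \scale{X}^{-1} z \;=\; X z \;=\; x,$$
as claimed. The main (and only) obstacle is keeping the scaling/permutation bookkeeping clean; once $\outX \scale{\outX} = X \scale{X} Q$ with $Q$ a signed permutation is identified, the RIP transfer and the reduction to a standard recovery statement are immediate.
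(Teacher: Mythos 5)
Your proof is correct and follows essentially the same route as the paper's: define a feasible $t/2$-sparse vector (your $\tilde z = Q^{-1}\scale{X}^{-1}z$ equals the paper's $\outz = \scale{\outX}^{-1}\Gamma^{-1}P^{-1}z$), invoke exact $\ell_1$-recovery under the $(t,\epsilon)$-RIP with $\epsilon < 4/\sqrt{41}$, and map back. The only difference is that you make explicit the transfer of the RIP from $\frac{\sqrt n}{\|A\|_F}AX\scale{X}$ to $\frac{\sqrt n}{\|A\|_F}A\outX\scale{\outX}$ via the signed permutation $Q = P\,\sign(\Gamma)$ — a step the paper's proof leaves implicit (it is the content of Lemma \ref{lemma:rescale}, stated elsewhere but not cited here), so your version is actually a touch more complete.
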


\begin{proof}

We have $x = Xz = \outX \scale{\outX} \outz$ with $\outz := \scale{\outX}^{-1}  \Gamma^{-1} P^{-1} z$ and thus
\[
  \frac{\sqrt{n}}{\|A\|_F} A \outX \scale{\outX} \outz
  = \frac{\sqrt{n}}{\|A\|_F} A x
  = \frac{\sqrt{n}}{\|A\|_F} b.
\]
Since $\outz$ is a rescaling and permutation of $z$, it is $t/2$-sparse and since $\frac{\sqrt{n}}{\|A\|_F} A \outX \scale{\outX}$ satisfies the $(t, \epsilon)$-RIP, it is the unique solution of the $\ell_1$-minimization problem in Algorithm \ref{alg:sparse-recovery}, see e.g. \cite{FoucartRauhut2013}. It follows that the algorithm returns $\outX \scale{\outX} \outz = X z = x$.

\end{proof}

In order to invoke the previous proposition, we need the scaled permutation $\outX$ of the component matrix. This is provided by Algorithm \ref{alg:train} by the following theorem. It uses the stable rank of a matrix defined by
\[
  \srank(A) := \frac{\|A\|_F^2}{\|A\|^2},
\]
which is a variant of the rank with more stable behaviour for small singular values.

\begin{theorem}
  \label{th:train}

  For Algorithm \ref{alg:train} assume that
  \begin{enumerate}[label=(A\arabic*)]

    \item The input data is generated by $B = AXZ$, with a matrix $A \in \real^{m \times n}$, a restricted $s/n$-Bernoulli Subgaussian component matrix $X \in \real^{n \times p}$, as in Definition \ref{def:restricted-bernoulli-subgaussian}, and deterministic combinators $Z \in \real^{p \times q}$ with $t/2$-sparse columns.

    \item  We have
    \begin{align}
      p & \le q, &
      n & > c_1 p^2 \log^2 p, & 
      \frac{2}{p} & \le \frac{s}{n} \le \frac{c_2}{\sqrt{p}}, &
      \label{eq:assumption:sparse-factorization}
    \end{align}
    \begin{equation}
      \srank(A) \ge C K^4 \frac{n t}{s \epsilon^2} \log \left( \frac{3p}{\epsilon t} \right)
      \label{eq:assumption:rip}
    \end{equation}
    for some constants $c_1, c_2, C \ge 0$ and $\psi_2$-norm bound $K$ from the Bernoulli-Subgaussian model \eqref{eq:bernoulli-subgaussian}. The first assumption \eqref{eq:assumption:sparse-factorization} ensures sparse matrix factorization and the second \eqref{eq:assumption:rip} the RIP of $AX$.

    \item \label{assumption:sparse-l1-recovery} The input $u$ of the algorithm satisfies $u \ge st$ and for every $b \in \real^m$, the system $Ax = b$ has at most one $u$-sparse solution.
    
    \item \label{assumption:enough-simple} For the output $\outX$, $\outZ$ of Algorithm \ref{alg:train}, the matrix $\outY = \outX \outZ$ has at least $\outq \ge p$ columns and rank $p$.

  \end{enumerate}

  Then there are constants $c > 0$ and $C \ge 0$ independent of the probability model, dimensions and sparsity so that with probability at least
  \[
    1 - C p^{-c}
  \]
  the output $\outX$ of Algorithm \ref{alg:train} is a scaled permutation permutation $\outX = X P \Gamma$ of the component matrix $X$ with permutation $P$ and invertible signed scaling matrix $\Gamma$ and $A\outX$ satisfies the RIP
  \begin{equation}
    (1-\epsilon) \|v\|_2
    \le \frac{\sqrt{n}}{\|A\|_F}\|A \outX \scale{\outX} v \|_2
    \le (1+\epsilon) \|v\|_2
    \label{eq:th:train:RIP}
  \end{equation}
  for all $t$-sparse vectors $v \in \real^p$.
  
\end{theorem}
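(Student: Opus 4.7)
My plan is to decompose the proof into three stages: (i) show that the filtered matrix $\outY$ equals $X Z_L$ for a sub-matrix $Z_L$ of $Z$; (ii) invoke the \SparseFactor{} guarantee of Definition \ref{def:sparse-factorization} on $\outY$ to extract $\outX = X P \Gamma$; (iii) transfer the product RIP of $AX$ from Section \ref{sec:product-RIP} to $A \outX \scale{\outX}$. The final probability bound $1 - C p^{-c}$ will then follow from a union bound over the three failure events.

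For stage (i), I would first bound the column sparsity of $X$. Each column has $n$ independent Bernoulli-Subgaussian entries with success probability $s/n$, so a standard Chernoff bound gives $\|X_k\|_0 \le 2s$ simultaneously for all $k = 1, \dots, p$ with probability at least $1 - p\, e^{-cs}$. Since \eqref{eq:assumption:sparse-factorization} forces $s \ge 2n/p \ge 2 c_1 p \log^2 p$, this tail is easily absorbed into $C p^{-c}$. Because each $Z_l$ is $t/2$-sparse, $\|X Z_l\|_0 \le s t \le u$ for every $l$. Now if $Y_l$ survives the filter, i.e.\ $\|Y_l\|_0 \le u$, then by assumption \ref{assumption:sparse-l1-recovery} the system $A y = B_l$ has a unique $u$-sparse solution; since $X Z_l$ is also $u$-sparse and satisfies the same equation, necessarily $Y_l = X Z_l$. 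Hence the surviving columns of $\outY$ all have the form $X Z_l$, so $\outY = X Z_L$ for the column-selection $Z_L$ of $Z$ corresponding to the surviving indices.

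For stage (ii), assumption \ref{assumption:enough-simple} provides rank $p$ for $\outY$ and $\outq \ge p$ columns, which forces $Z_L \in \real^{p \times \outq}$ to have full row rank $p$. The dimensional and sparsity conditions \eqref{eq:assumption:sparse-factorization} match the hypotheses of Definition \ref{def:sparse-factorization}, so the call $\outX, \outZ = \SparseFactor(\outY)$ returns $\outX = X P \Gamma$ with probability at least $1 - C p^{-c}$. For stage (iii), the restricted Bernoulli-Subgaussian model on $X$ together with the stable-rank bound \eqref{eq:assumption:rip} matches the Kasiviswanathan–Rudelson product RIP of Section \ref{sec:product-RIP}, yielding the $(t,\epsilon)$-RIP of $\frac{\sqrt{n}}{\|A\|_F} A X \scale{X}$. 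Writing $\outX = X P \Gamma$, the column normalization $\scale{\outX}$ exactly cancels the magnitudes in the signed diagonal $\Gamma$, so $\outX \scale{\outX}$ is a signed column permutation of $X \scale{X}$. Since the RIP constant is invariant under signed column permutations, $\frac{\sqrt{n}}{\|A\|_F} A \outX \scale{\outX}$ inherits the $(t,\epsilon)$-RIP, yielding \eqref{eq:th:train:RIP}.

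The main obstacle I anticipate is the probability bookkeeping: each of the three failure events—the column-sparsity tail for $X$, the product-RIP tail from Section \ref{sec:product-RIP}, and the \SparseFactor{} tail—must be individually bounded by $C p^{-c}$ under the quantitative hypotheses \eqref{eq:assumption:sparse-factorization}–\eqref{eq:assumption:rip}, and the three must be jointly controllable by a union bound with constants absorbed into $C$ and $c$. A subtler conceptual point in (i) is that the filter $\|Y_l\|_0 \le u$ alone does not imply $Y_l = X Z_l$ unless $X Z_l$ itself is $u$-sparse; the Chernoff step is precisely what rules out the bad event that some $X Z_l$ has excessive support, thereby preventing spurious $u$-sparse $\ell_1$-minimizers that are not of the form $X Z_l$ from polluting $\outY$.
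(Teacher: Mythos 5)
Your proposal is correct and follows essentially the same three-stage decomposition as the paper's proof: a Chernoff/Bernstein bound (Lemma \ref{lemma:bernoulli-subgaussian-sparsity}) to ensure all columns of $X$ are $2s$-sparse so that the filtered $\outY$ equals $XZ_L$ by the uniqueness assumption, then Theorem \ref{th:sparse-factorization} to extract $\outX = X P \Gamma$, then Corollary \ref{cor:RIP-AX} plus the rescaling identity of Lemma \ref{lemma:rescale} to transfer the product RIP to $A\outX\scale{\outX}$, with a final union bound. The one small streamlining in your stage (ii) is that you infer $\operatorname{rank}(Z_L)=p$ directly from $\operatorname{rank}(\outY)=\operatorname{rank}(XZ_L)=p$ (since rank of a product is at most the rank of each factor), whereas the paper separately invokes Corollary \ref{cor:full-rank} to establish full column rank of $X$ first; both are valid, and yours avoids one extra appeal to a probabilistic lemma.
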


The assumptions \eqref{eq:assumption:sparse-factorization} \eqref{eq:assumption:rip} and Assumption \ref{assumption:sparse-l1-recovery} pose restrictions on the sizes and sparsity of the involved matrices and vectors. Since they are not independent of each other, we have to ensure that all of them can be met at once. We do so first with a rather crude heuristic, ignoring all log factors. A more thorough argument is given in Section \ref{sec:assumptions} below. In the following, we optimize the sizes so that $A$ becomes as flat as possible, assuming that $A$ has full stable rank $\srank(A) \approx m$.

Let us first consider the RIP condition \ref{eq:assumption:rip}. We choose the RIP constant $\epsilon$ small enough so that $\ell_1$-recovery is possible, but not too close to zero so that $\epsilon \approx 1$. Ignoring log factors, and making the stable rank as small as possible, the RIP condition \eqref{eq:assumption:rip} then reduces to $\srank(A) \approx \frac{n}{s} t$. For a flat $A$ with few rows or small stable rank, it is beneficial if the matrix $X$ has low sparsity and high randomness, which is ensured by a large $s$. 

This has to be balanced with the requirement of small sparsity $s$ for the unique recovery condition in Assumption \ref{assumption:sparse-l1-recovery}. Indeed, unique recovery of $u \approx st$-sparse vectors requires at least $2u \approx 2st \le m \approx \srank(A)$ rows. Again, choosing the smallest feasible stable rank, together with the RIP constraints from above, we have $st \approx \srank(A) \approx \frac{n}{s} t$ or equivalently $s^2 \approx n$.

Finally, we have to satisfy the sparse factorization assumptions \eqref{eq:assumption:sparse-factorization}. Plugging $s^2 \approx n$ into the second and third inequalities and ignoring log factors, we obtain $s^2 \ge p^2$ and $\frac{2}{p} \lesssim \frac{1}{s} \lesssim \frac{c_2}{\sqrt{p}}$. The first two inequalities then imply that the assumption is satisfied with $p \approx s$.

So how few rows can $A$ have? Using $s^2 \approx n$ and the sparse recovery assumption $st \approx \srank(A) \approx m$ from above, we obtain $t \sqrt{n} \approx \srank(A) \approx{m}$. Thus, unlike regular compressed sensing, the matrix $A$ cannot have exponentially more columns than rows. However, choosing $n$ large and creating some extra space for the sparsity $t$, the ratio of rows and columns can still be arbitrarily small.

The remaining Assumption \ref{assumption:enough-simple} depends on the output of the algorithm and can therefore only be verified a-posteriori. It reflects our earlier discussion that the samples $B$ contain sufficiently many easy cases so that after filtering out unsuccessful $\ell_1$ recoveries, the remaining matrix $\outY = \outX \outZ$ has sufficiently many independent columns.

Finally, we show the correctness of Algorithm \ref{alg:train-and-recover}, which lifts the dependence on the unknown sparsity bound $u$.

\begin{theorem}
  \label{th:train-and-recover}
  Assume that all assumptions of Theorem \ref{th:train} hold for unknown $u$ and $\epsilon < 4/\sqrt{41} \approx 0.6246$, with data $B = XZ$. Let $x$ be generated by the data model $x = X z$ for some $t/2$-sparse vector $z\in \real^p$ and $b = Ax$. 

  Then there are constants $c > 0$ and $C \ge 0$ independent of the probability model and dimensions so that with probability at least
  \[
    1 - C p^{-c}
  \]
  the output of Algorithm \ref{alg:train-and-recover} yields $\textalg{TrainAndRecover}(b, B) = x$.
\end{theorem}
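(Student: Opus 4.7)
The plan is to reduce Theorem~\ref{th:train-and-recover} to Theorem~\ref{th:train} and Proposition~\ref{prop:sparse-recovery}, combined with a short uniqueness argument that justifies the final $\argmin_u \|x_u\|_0$. Algorithm~\ref{alg:train-and-recover} sweeps over every $u \in \{1,\dots,n\}$, so in particular it visits the unknown value $u^*\ge st$ at which the assumptions of Theorem~\ref{th:train} are stipulated to hold.

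First, I would apply Theorem~\ref{th:train} to that specific iteration $u=u^*$. With probability at least $1-Cp^{-c}$ the matrix $\outX$ returned by Train equals $XP\Gamma$ for some permutation $P$ and invertible signed scaling $\Gamma$, and the normalized matrix $\frac{\sqrt{n}}{\|A\|_F} A \outX \scale{\outX}$ satisfies the $(t,\epsilon)$-RIP with $\epsilon<4/\sqrt{41}$. Feeding this $\outX$ together with $b=Ax$ into SparseRecovery, Proposition~\ref{prop:sparse-recovery} yields $x_{u^*}=x$ on that event.

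It remains to show that the $\argmin$ step returns $x$. For every $u$ for which the inner $\ell_1$ program is feasible, the output $x_u$ satisfies $Ax_u=b$ by construction of SparseRecovery; infeasible $u$'s may be discarded, since $u=u^*$ already supplies one feasible candidate. Let $u^{\min}$ attain the minimum, so that $\|x_{u^{\min}}\|_0\le\|x_{u^*}\|_0=\|x\|_0$. To invoke Assumption~\ref{assumption:sparse-l1-recovery} (unique $u^*$-sparse solutions of $Ax=b$) and conclude $x_{u^{\min}}=x$, I need $\|x\|_0\le u^*$. Since $x=Xz$ with $z$ being $t/2$-sparse and each column of $X$ having a Binomial$(n,s/n)$ support, a standard Chernoff bound yields $\|x\|_0\le st\le u^*$ with failure probability that can be absorbed into the same $1-Cp^{-c}$. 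Both $x_{u^{\min}}$ and $x$ are then $u^*$-sparse solutions of $Ax=b$, and uniqueness forces equality.

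I expect the one nontrivial point to be precisely this probabilistic control of $\|x\|_0$, since the sparsity of $Xz$ with random $X$ is itself random; everything else is bookkeeping, and the overall failure event is the union of the event of Theorem~\ref{th:train} at $u=u^*$ and the sparsity-concentration event for $x$, both of the stated order. An alternative that avoids the concentration step altogether would be to strengthen Assumption~\ref{assumption:sparse-l1-recovery} to require uniqueness up to the deterministic bound $(t/2)\max_k\|X_k\|_0$ rather than $st$, after which the argument above goes through verbatim and deterministically at $u=u^*$.
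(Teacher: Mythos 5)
Your proposal is correct and follows essentially the same route as the paper: apply Theorem~\ref{th:train} and Proposition~\ref{prop:sparse-recovery} at the iteration $u=u^*$, then combine the uniqueness Assumption~\ref{assumption:sparse-l1-recovery} with a concentration bound showing $\|x\|_0 \le u^*$ to justify the final $\argmin$. The paper makes the concentration step explicit by invoking Lemma~\ref{lemma:bernoulli-subgaussian-sparsity} to show every column of $X$ is $2s$-sparse with probability $1-2e^{-cp}$, which yields $\|x\|_0 \le 2s\cdot(t/2)=st\le u^*$; your ``standard Chernoff bound'' is that same argument, though stated a little loosely (you need the per-column sparsity to concentrate at $\le 2s$ rather than at the expectation $s$, and the union over the $t/2$ columns in the support of $z$, to land at $st$ rather than $st/2$).
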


In summary, for carefully chosen dimensions and sparsity levels, the algorithms work as expected in the motivation: Given a training sample $B$ that contains sufficiently many simple examples, we can indeed recover the component matrix $X$ and then also solve compressed sensing problems generated by our data model for which no alternative algorithms with provable recovery guarantees are known.

\section{Proof of the Main Results}
\label{sec:proof-main}

In this section, we prove the main results of this paper. First, we show RIP properties of $AX$ in Section \ref{sec:product-RIP}, then implementations and results for the component $\SparseFactor$ in Section \ref{sec:sparse-factorization}. They are combined to the main results in Section \ref{sec:proof-algorithms} and finally a discussion of the assumptions is provided in Section \ref{sec:assumptions}.

\subsection{The RIP for \texorpdfstring{$AX$}{AX}}
\label{sec:product-RIP}

The RIP of $AX$ follows directly from \cite{KasiviswanathanRudelson2019}. In this section, we show some corollaries because our algorithms use a different scaling of $X$.

\begin{theorem}[{\cite[Theorem 3.4]{KasiviswanathanRudelson2019}}]
  \label{th:RIP-AX}
  Let $A \in \real^{m \times n}$ be a matrix and $X \in \real^{n \times p}$ satisfy the Bernoulli-Subgaussian-model \eqref{eq:bernoulli-subgaussian}.
  For RIP constant $0 < \epsilon < 1$ and sparsity $t \in \mathbb{N}$ satisfying
  \begin{equation}
    \label{eq:RIP-AX-condition}
    \srank(A) \ge C K^4 \frac{n t}{s \epsilon^2} \log \left( \frac{p}{\epsilon t} \right)
  \end{equation}
  with probability at least
  \[
    1 - \exp \left( - \frac{c}{K^4} \frac{s}{n} \epsilon^2 \srank(A) \right)
  \]
  the matrix $AX$ satisfies the RIP
  \[
    (1-\epsilon) \|v\|_2
    \le \sqrt{\frac{n}{s}} \frac{1}{\variance \|A\|_F}\|A X v \|_2
    \le (1+\epsilon) \|v\|_2
  \]
  for all $t$-sparse vectors $v \in \real^p$.
\end{theorem}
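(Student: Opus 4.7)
The strategy is the standard one for random-matrix RIP proofs: establish pointwise concentration of $\sqrt{n/s}\|AXv\|_2/(\variance \|A\|_F)$ around $\|v\|_2$ for each fixed $t$-sparse unit vector $v \in \real^p$, then extend to all such $v$ by covering the set of $t$-sparse unit vectors with an $\epsilon/4$-net and taking a union bound.

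For pointwise concentration, I would write $\|AXv\|_2^2 = w^T(A^TA)w$ where $w := Xv \in \real^n$ has independent centered entries $w_j = \sum_k \Omega_{jk} R_{jk} v_k$, each with variance $(s/n)\variance^2\|v\|_2^2$ and $\psi_2$-norm at most $C\variance K \|v\|_2$. Consequently $\E{\|AXv\|_2^2} = (s/n)\variance^2 \|v\|_2^2 \|A\|_F^2$, which justifies the prefactor $\sqrt{n/s}/(\variance\|A\|_F)$, and a deviation inequality for subgaussian quadratic forms controls fluctuations of $\|AXv\|_2^2$ around this mean in terms of $\|A^TA\|_F \le \|A\|\|A\|_F$ and $\|A^TA\| = \|A\|^2 = \|A\|_F^2/\srank(A)$. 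An exponential bound on $\bigl|\|AXv\|_2^2/\E{\|AXv\|_2^2} - 1\bigr|$ then becomes a $(1\pm\epsilon)$ bound on $\|AXv\|_2$ after taking a square root and readjusting $\epsilon$ by a constant.

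For the extension to all $t$-sparse unit vectors, I would use an $\epsilon/4$-net $\mathcal{N}$ in the union of coordinate subspheres of $\real^p$ of dimension $t$, with $|\mathcal{N}| \le \binom{p}{t}(12/\epsilon)^t \le (Cp/(\epsilon t))^t$, and pass from $\mathcal{N}$ to all $t$-sparse unit vectors by the standard doubling argument (e.g.\ \cite[Chapter 6]{FoucartRauhut2013}). Condition \eqref{eq:RIP-AX-condition} is then the balance point at which the pointwise failure probability, raised to $|\mathcal{N}|$, is absorbed while leaving the residual bound $\exp\bigl(-c(s/n)\epsilon^2\srank(A)/K^4\bigr)$.

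The main obstacle is the sharp single-vector concentration. Because each $w_j$ arises from a Bernoulli--subgaussian product, its $\psi_2$-norm exceeds its standard deviation by a factor of order $\sqrt{n/s}$, so a plain application of Hanson--Wright to $w^T A^T A w$ yields an exponent of order $(s/n)^2\epsilon^2 \srank(A)/K^4$, which is off by a factor $s/n$ from the rate advertised in the statement. Recovering the linear-in-$(s/n)$ exponent requires exploiting the Bernoulli--subgaussian product structure, for instance by conditioning on $\Omega$ to decouple the sparsity pattern from the magnitudes and running a Dudley-type chaining bound on the resulting centered subgaussian process over the row space of $A^TA$, as done in \cite{KasiviswanathanRudelson2019}. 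Once this sharpened single-vector estimate is in hand, the covering and union-bound steps that deliver \eqref{eq:RIP-AX-condition} are routine.
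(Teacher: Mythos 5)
The paper does not prove this theorem: it is quoted verbatim from \cite[Theorem 3.4]{KasiviswanathanRudelson2019}, and the paper's own work in Section~\ref{sec:product-RIP} begins afterwards with Corollary~\ref{cor:RIP-AX}, which only adjusts the column scaling via $\scale{X}$. So there is no in-paper proof to compare against. Your sketch is nonetheless a sound outline of how the cited result is actually established: the reduction to the quadratic form $w^T A^T A\, w$ with $w=Xv$, the computation $\E{\|AXv\|_2^2}=(s/n)\variance^2\|v\|_2^2\|A\|_F^2$ which explains the normalization, and the covering/union-bound step with $|\mathcal{N}|\le (Cp/(\epsilon t))^t$ matching the log factor in \eqref{eq:RIP-AX-condition} are all correct. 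You also correctly diagnose the central technical obstacle: a direct Hanson--Wright bound on $w^T A^T A\, w$ treats the $w_j$ as generic subgaussians with $\psi_2$-norm of order $\variance K$, but their standard deviation is only of order $\sqrt{s/n}\,\variance$, so the resulting exponent scales like $(s/n)^2\epsilon^2\srank(A)/K^4$ rather than the claimed $(s/n)\epsilon^2\srank(A)/K^4$; recovering the extra factor of $n/s$ requires exploiting the Bernoulli--subgaussian product structure, as is done in \cite{KasiviswanathanRudelson2019}. Your sketch therefore honestly defers the sharp single-vector estimate to that reference --- the same deference the paper itself makes by stating the theorem as a citation.
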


The columns of $X$ in the last theorem are scaled by their expected $\ell_2$-norms. In Algorithm \ref{alg:train}, we only recover $X$ up to scaling and the normalize all columns by their actual $\ell_2$-norms with the matrix $\scale{X}$. The following Corollary provides the RIP in this scenario.

\begin{corollary}
  \label{cor:RIP-AX}
  Let $A \in \real^{m \times n}$ be a matrix and $X \in \real^{n \times p}$ satisfy the Bernoulli-Subgaussian-model \eqref{eq:bernoulli-subgaussian}.
  For RIP constant $0 < \epsilon < 1$ and sparsity $t \in \mathbb{N}$ satisfying
  \begin{equation}
  \label{eq:RIP-scale-condition}
    \srank(A) \ge C K^4 \frac{n t}{s \epsilon^2} \log \left( \frac{3p}{\epsilon t} \right)
  \end{equation}
  with probability at least
  \[
    1 - 3 \exp \left( - \frac{c}{K^4} \frac{s}{n} \epsilon^2 \srank(A) \right)
  \]
  the matrix $AX$ satisfies the RIP
  \[
    (1-\epsilon) \|v\|_2
    \le  \frac{\sqrt{n}}{\|A\|_F}\|A X \scale{X} v \|_2
    \le (1+\epsilon) \|v\|_2
  \]
  for all $t$-sparse vectors $v \in \real^p$.
\end{corollary}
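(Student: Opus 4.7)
The plan is to reduce Corollary \ref{cor:RIP-AX} to Theorem \ref{th:RIP-AX} by showing that the random diagonal scaling $\scale{X}$, built from the actual column norms $\|X_k\|_2^{-1}$, is uniformly close to the deterministic scaling $(\variance\sqrt{s})^{-1}I$ built from the expected column norms. For any $t$-sparse $v\in\real^p$, setting $w := \scale{X}v$ (again $t$-sparse with the same support) gives the algebraic identity
\[
  \frac{\sqrt{n}}{\|A\|_F}\|AX\scale{X}v\|_2
  = \variance\sqrt{s}\cdot\sqrt{\frac{n}{s}}\frac{1}{\variance\|A\|_F}\|AXw\|_2,
\]
so Theorem \ref{th:RIP-AX}, applied with RIP constant $\epsilon' := c_1\epsilon$ for a small absolute $c_1$, bounds the right hand side by $(1\pm\epsilon')\variance\sqrt{s}\|\scale{X}v\|_2$ with probability at least $1-\exp(-c\epsilon^2 s\,\srank(A)/(nK^4))$; the extra constants from $\epsilon\to\epsilon'$ are folded into the constant $C$ in \eqref{eq:RIP-scale-condition}.

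Next I would control $\variance\sqrt{s}\|\scale{X}v\|_2$ in terms of $\|v\|_2$ by concentrating each column norm $\|X_k\|_2^2 = \sum_{j=1}^{n}\Omega_{jk}R_{jk}^2$ about its mean $s\variance^2$. The summands are independent, non-negative subexponentials with $\psi_1$-norm at most $C\variance^2K^2$ (since $\|R_{jk}\|_{\psi_2}\le\variance K$ gives $\|R_{jk}^2\|_{\psi_1}\lesssim \variance^2K^2$), so Bernstein's inequality gives
\[
  \pr{\bigl|\|X_k\|_2^2-s\variance^2\bigr|>\delta s\variance^2}
  \le 2\exp\!\left(-c\min\!\left(\frac{\delta^2s^2}{nK^4},\frac{\delta s}{K^2}\right)\right).
\]
With $\delta := c_2\epsilon$ and a union bound over $k=1,\dots,p$, the strengthened log factor $\log(3p/(\epsilon t))$ in \eqref{eq:RIP-scale-condition} forces $\epsilon^2 s\,\srank(A)/(nK^4) \gtrsim \log p$, which absorbs the extra factor of $p$ from the union bound into the exponential and yields a failure probability of at most $2\exp(-c'\epsilon^2 s\,\srank(A)/(nK^4))$. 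On the complementary event, $\|X_k\|_2^2\in[(1-\delta)s\variance^2,(1+\delta)s\variance^2]$ for every $k$, hence $\variance\sqrt{s}\|\scale{X}v\|_2/\|v\|_2\in[(1+\delta)^{-1/2},(1-\delta)^{-1/2}]$ for every $t$-sparse $v$.

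Combining the RIP event with the upper and lower column-norm events via one more union bound produces the factor $3$ in the probability $1-3\exp(-c''\epsilon^2s\,\srank(A)/(nK^4))$ and places
\[
  \frac{\sqrt{n}}{\|A\|_F}\|AX\scale{X}v\|_2/\|v\|_2
  \in\bigl[(1-\epsilon')(1+\delta)^{-1/2},\,(1+\epsilon')(1-\delta)^{-1/2}\bigr].
\]
Choosing $c_1$ and $c_2$ sufficiently small (an elementary check for $\epsilon\in(0,1)$) pushes this interval inside $[1-\epsilon,1+\epsilon]$, which is the desired RIP. The main obstacle is purely the book-keeping of constants and log factors: verifying that $\delta^2 s^2/(nK^4)$ dominates the union-bound $\log p$ is precisely why \eqref{eq:RIP-scale-condition} strengthens $\log(p/(\epsilon t))$ to $\log(3p/(\epsilon t))$. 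No new probabilistic ingredient is required beyond Bernstein's inequality for subexponentials and Theorem \ref{th:RIP-AX} itself.
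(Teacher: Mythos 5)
Your overall strategy is the same as the paper's: reduce the claim to Theorem \ref{th:RIP-AX} applied with a slightly smaller RIP constant, then show that the random rescaling by $\scale{X}$ is close to the deterministic rescaling by $(\variance\sqrt{s})^{-1}$ via a concentration estimate for the column norms, and finally combine the two events. The algebraic manipulations and the choice of $\epsilon'$, $\delta$ proportional to $\epsilon$ are all fine, and the final interval-nesting step is routine.

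The gap is in the concentration estimate for $\|X_k\|_2^2$, and it is a real one. You apply the $\psi_1$-norm form of Bernstein to the $n$ summands $\Omega_{jk}R_{jk}^2$, each of $\psi_1$-norm $\lesssim\variance^2K^2$. That version of Bernstein uses $\sum_j\|\cdot\|_{\psi_1}^2\asymp n\variance^4K^4$ as the variance proxy in the sub-Gaussian branch, which gives the exponent $\frac{\delta^2 s^2}{nK^4}$. But the actual variance of $\Omega_{jk}R_{jk}^2$ is only $\asymp\frac{s}{n}\variance^4K^4$ because of the Bernoulli mask, so the $\psi_1$ proxy overcounts by a factor of $n/s$. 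The paper's Lemma \ref{lemma:norm-concentration} (via the sparse Hanson-Wright inequality, Theorem \ref{th:sparse-hanson-wright}, with $M=I$) exploits this sparsity and gets the exponent $\frac{s\delta^2}{K^4}$, which is larger by exactly that factor of $n/s$.

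This difference is not merely cosmetic. Your claim that the hypotheses force the union-bound $\log p$ to be absorbed rests on $\frac{\delta^2 s^2}{nK^4}\gtrsim\log p$. But \eqref{eq:RIP-scale-condition} together with $\srank(A)\le n$ and Lemma \ref{lemma:log-inequality} only yields $\frac{s\epsilon^2}{K^4}\gtrsim\log p$; multiplying by $\frac{s}{n}<1$ gives $\frac{s^2\epsilon^2}{nK^4}\gtrsim\frac{s}{n}\log p$, which can be $\ll\log p$ in the sparse regime $s\ll n$ that the paper cares about. Likewise, matching the stated failure probability $3\exp(-\frac{c}{K^4}\frac{s}{n}\epsilon^2\srank(A))$ from your per-column exponent would require $s\gtrsim\srank(A)$, which does not follow from the assumptions (in the paper's heuristic regime $\srank(A)\sim st\ge s$). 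Replacing your Bernstein step with Theorem \ref{th:sparse-hanson-wright} applied to $M=I$ (i.e., with Lemma \ref{lemma:norm-concentration} and Corollary \ref{cor:scale-norm}) repairs the argument and recovers exactly the paper's proof.
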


The prove relies on the simple observation that with high probability the actual scaling in $\scale{X}$ and the expected scaling $\sqrt{s}$ are close. This can be shown by several concentration estimates. We use the following Hanson-Wright inequality.
\cite[Theorem C.5]{KasiviswanathanRudelson2019}

\begin{theorem}[{\cite[Theorem 1.1]{Zhou2019}}] 
  \label{th:sparse-hanson-wright}
  Let $M \in \real^{n \times n}$ be a matrix and $X \in \real^{n \times 1}$ satisfy the Bernoulli-Subgaussian-model \eqref{eq:bernoulli-subgaussian}. Then, for every $\tau \ge 0$
  \[
    \pr{ \left| X^T M X - \E{X^T M X} \right| \ge \variance^2 \tau } \le 2 \exp \left( -c \min \left\{ \frac{n \tau^2}{s K^4 \|M\|_F^2}, \, \frac{\tau}{K^2 \|M\|} \right\} \right)
  \]
\end{theorem}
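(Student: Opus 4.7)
The plan is to prove this sparse Hanson--Wright inequality via the standard moment-generating-function approach, adapted to handle the Bernoulli-Subgaussian product structure $X_j = \Omega_j R_j$. First, I would split the centred quadratic form into its diagonal and off-diagonal parts,
\[
  X^T M X - \E{X^T M X} = \sum_j M_{jj}\bigl(X_j^2 - \E{X_j^2}\bigr) + \sum_{j \ne k} M_{jk} X_j X_k,
\]
and bound each piece separately, then combine them by a union bound, absorbing constants into $c$.

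For the diagonal piece, each summand $M_{jj}(\Omega_j R_j^2 - (s/n)\variance^2)$ is independent, centred, and sub-exponential: the $\psi_1$-norm of $\Omega_j R_j^2$ is controlled by $K^2$ (since $R_j$ is $\psi_2$ with constant $\variance K$ and $\Omega_j \in \{0,1\}$), while the variance picks up only one factor of $s/n$ because $\E{\Omega_j^2} = s/n$. A standard Bernstein inequality then produces a tail with Frobenius-type term $(s/n) K^4 \sum_j M_{jj}^2$ and operator-norm-type term $K^2 \max_j |M_{jj}|$, both dominated by the target bound.

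For the off-diagonal chaos, I would apply the standard decoupling inequality to replace it by $4\sum_{j \ne k} M_{jk} X_j X'_k$ for an independent copy $X'$, then condition on $X'$. Given $X'$, the expression is the linear combination $\langle X, MX'\rangle$ of independent Bernoulli-Subgaussian variables $X_j$ with coefficients $(MX')_j$; its conditional MGF is bounded by $\exp\bigl(c (s/n) K^2 \lambda^2 \|MX'\|_2^2\bigr)$ for $\lambda$ in a suitable range, since the coordinate-wise $\psi_2$-norm of $X_j$ is at most $(s/n)^{1/2} K \variance$. The unconditional MGF is then obtained by concentrating $\|MX'\|_2^2$ around its mean $(s/n)\variance^2 \|M\|_F^2$ (another application of the same inequality to the diagonal quadratic form $X'^T M^T M X'$, set up recursively or via a truncation argument). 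Optimising the Chernoff parameter $\lambda$ yields the two regimes appearing in the statement: small $\lambda$ produces the sub-Gaussian term $n\tau^2/(sK^4\|M\|_F^2)$, while large $\lambda$ produces the sub-exponential term $\tau/(K^2\|M\|)$.

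The main obstacle is tracking the sparsity factor $s/n$ so that it appears \emph{linearly} (not quadratically) in the Frobenius-type term, despite each entry $X_j$ carrying a factor of $\Omega_j$. The cleanest way is to centre the Bernoulli variables explicitly as $\Omega_j = s/n + (\Omega_j - s/n)$, expand the quadratic form in the resulting $3\times 3$ block of products, observe that the pure mean--mean contribution is exactly cancelled by subtracting $\E{X^T M X}$, and use $\E{(\Omega_j - s/n)^2} = (s/n)(1 - s/n) \le s/n$ to extract exactly one factor of $s/n$ per centred Bernoulli appearing in each surviving term. A careful accounting of these factors through the decoupling and Bernstein steps then produces the inequality with the stated dependence on $K$, $s/n$, $\|M\|_F$, and $\|M\|$.
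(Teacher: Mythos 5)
The paper does not prove this theorem at all: it is quoted verbatim from \cite[Theorem 1.1]{Zhou2019}, and the paper's ``proof'' is a one-line remark verifying that the two normalization conventions ($\|R_{jk}\|_{\psi_2} \le K$ versus $\|R_{jk}\|_{\psi_2} \le \variance K$) agree once one accounts for the $\variance^2$ on the left-hand side. Your proposal, by contrast, is a from-scratch derivation of the sparse Hanson--Wright inequality, which is a much larger undertaking and essentially amounts to reproving the cited result. There is nothing wrong with that in principle, and your outline (diagonal/off-diagonal split, Bernstein for the diagonal, decoupling and conditional MGF for the chaos) is the standard template.

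There is, however, one step in your sketch whose stated justification is incorrect, and it happens to be exactly the point that makes the sparse version of Hanson--Wright non-trivial. You write that ``the coordinate-wise $\psi_2$-norm of $X_j$ is at most $(s/n)^{1/2} K \variance$.'' This is false: since $X_j = \Omega_j R_j$ with $\Omega_j \in \{0,1\}$, one has $|X_j| \le |R_j|$ pointwise, so $\|X_j\|_{\psi_2} \le \variance K$, and indeed from the moment formula $\E |X_j|^p = (s/n)\E|R_j|^p$ one checks $\sup_p p^{-1/2}\E{|X_j|^p}^{1/p}$ is of order $\variance K$, with no $\sqrt{s/n}$ gain. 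If you plug the true $\psi_2$-norm into the standard Hanson--Wright machinery you lose the $s/n$ factor in the Frobenius term and only recover the non-sparse bound. The gain comes instead from a \emph{restricted-range} MGF estimate: for $|\lambda a_j| \variance K \lesssim 1$ one has
\[
  \E e^{\lambda a_j X_j}
  = 1 - \tfrac{s}{n} + \tfrac{s}{n}\,\E e^{\lambda a_j R_j}
  \le \exp\!\left( C\,\tfrac{s}{n}\,\lambda^2 a_j^2 \variance^2 K^2 \right),
\]
which is the bound you would get from a sub-Gaussian variable of parameter $\sqrt{s/n}\,\variance K$, but it holds only in that bounded $\lambda$-window, and it is not a $\psi_2$-norm bound. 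Your diagonal estimate has the same issue but you handle it correctly there (``the variance picks up only one factor of $s/n$''); you should carry the same point of view through the off-diagonal step, replacing the $\psi_2$-norm claim by the restricted MGF bound above, and then the two regimes of the Chernoff optimization (the $\lambda$-window boundary) produce the operator-norm term while the interior produces the $s/n$-improved Frobenius term. With that correction your outline matches the structure of Zhou's proof; without it the off-diagonal step would not actually yield the stated inequality.
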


\begin{proof}
  Note that the cited theorem uses the convention $\|R_{jk}\|_{\psi_2} \le K$, whereas we use $\|R_{jk}\|_{\psi_2} \le \variance K$. This extra $\variance$ is cancelled by the lower bound $\variance^2 \tau$ in the left hand side. The rest is identical.
\end{proof}

The concentration estimate applied to our situation and removing some squares in the probability on the left hand side yields the following lemma. The arguments are similar to the proof of Theorem \ref{th:RIP-AX} in \cite{KasiviswanathanRudelson2019}.

\begin{lemma}
  \label{lemma:norm-concentration}
  Let $X \in \real^{n \times 1}$ satisfy the Bernoulli-Subgaussian-model \eqref{eq:bernoulli-subgaussian}. Then, for every $\tau \ge 0$
  \[
    \pr{ \left| \|X\|_2 - \sqrt{s}\variance  \right| \ge \sqrt{s} \variance\tau } \le 2 \exp \left( -\frac{c}{K^4} s \tau^2 \right).
  \]
\end{lemma}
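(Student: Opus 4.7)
The plan is to deduce the concentration of $\|X\|_2$ from a concentration bound on $\|X\|_2^2$, which follows directly from Theorem~\ref{th:sparse-hanson-wright} applied with $M = I_n$. In that case $\|M\|_F^2 = n$ and $\|M\| = 1$, and the independence of $\Omega$ and $R$ gives $\E{\|X\|_2^2} = \sum_j \E{\Omega_j}\E{R_j^2} = s\variance^2$. Hence for every $\sigma \ge 0$,
\[
  \pr{ \left| \|X\|_2^2 - s\variance^2 \right| \ge \variance^2 \sigma }
  \le 2 \exp\!\left( -c\,\min\!\left\{ \frac{\sigma^2}{s K^4},\ \frac{\sigma}{K^2} \right\} \right).
\]

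Next, I would convert the event $\{|\|X\|_2 - \sqrt{s}\variance| \ge \sqrt{s}\variance\tau\}$ into a lower bound on $|\|X\|_2^2 - s\variance^2|$. Writing $a = \|X\|_2 \ge 0$ and $b = \sqrt{s}\variance$, the event forces either $a \ge b(1+\tau)$, or (possible only when $\tau \le 1$) $a \le b(1-\tau)$. Expanding $|a^2-b^2| = |a-b|(a+b)$ in each case, one extracts $|a^2 - b^2| \ge s\variance^2 \tau$ for $\tau \in [0,1]$, and $a^2 - b^2 \ge s\variance^2 \tau^2$ for $\tau > 1$.

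Plugging these bounds back into Hanson--Wright finishes the argument. For $\tau \in [0,1]$ I would take $\sigma = s\tau$, giving an exponent $-c\min\{s\tau^2/K^4,\,s\tau/K^2\}$ that reduces to $-c s\tau^2/K^4$ in this regime (assuming $K \ge 1$, otherwise absorbing the factor into $c$). For $\tau > 1$ I would take $\sigma = s\tau^2$, giving $-c\min\{s\tau^4/K^4,\,s\tau^2/K^2\}$, which in either branch of the minimum is dominated by $-c s\tau^2/K^4$. Combining the two regimes yields the claimed subgaussian tail $2\exp(-cs\tau^2/K^4)$.

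The main obstacle -- though quite mild -- is the last step: because $\|X\|_2^2$ is a sum of subexponential random variables, Hanson--Wright is Bernstein-type rather than purely subgaussian, so tracking which term of the minimum controls the tail as $\tau$ crosses $1$, combined with the $\tau \leftrightarrow \tau^2$ conversion between $\|X\|_2$ and $\|X\|_2^2$, requires a little bookkeeping. Everything else follows immediately from the cited inequality and a direct mean computation.
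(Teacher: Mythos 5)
Your argument follows the same route as the paper's proof: apply the sparse Hanson--Wright inequality with $M = I$ to get concentration of $\|X\|_2^2$ around $s\variance^2$, then ``remove the square'' by splitting on $\tau \lessgtr 1$ and comparing $|a-b|$ with $|a^2-b^2|$, and finally observe that the Bernstein-type minimum collapses to the subgaussian branch. The one spot to tighten is the parenthetical ``assuming $K \ge 1$, otherwise absorbing the factor into $c$'': if $K$ were allowed to be arbitrarily small this absorption would fail, since $1/K^2$ can be dominated by $1/K^4$ only up to a constant when $K$ is bounded below; the correct justification, which the paper invokes, is that the definition of the $\psi_2$-norm together with $\E{R_{jk}^2} = \variance^2$ forces $K \ge 2^{-1/2}$, so $K^2 \le c K^4$ with a universal $c$.
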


\begin{proof}

  We first apply Theorem \ref{th:sparse-hanson-wright} to find a concentration estimate for $\|X\|_2^2$. To this end, let $M = I$ be the identity matrix. Then, we have the identities 
  \begin{align*}
    X^T M X & = \|X\|_2^2, &
    \|M\|_F^2 & = n, & 
    \|M\| & = 1
  \end{align*}
  and since $\Omega_j^2 = \Omega_j$ the expectation
  \[
    \E{X^T M X} = \sum_{j=1}^n \E{\Omega_j^2} \E{R_j^2} = n \frac{s}{n} \variance^2 = s \variance^2.
  \]
  Applying Theorem \ref{th:sparse-hanson-wright} implies
  \begin{equation}
    \begin{aligned}
      \pr{ \left| \|X\|_2^2 - s\variance^2 \right| \ge s \variance^2 \theta } 
      & \le 2 \exp \left( -c s \min \left\{ \frac{\theta^2}{K^4}, \, \frac{\theta}{K^2} \right\} \right) \\
      & \le 2 \exp \left( -\frac{c}{K^4} s \min \left\{ \theta^2, \, \theta \right\} \right),
    \end{aligned}
    \label{eq:norm-square-concentration}
  \end{equation}
  where in the last step we have used that $K^2 \le c K^4$ because by definition of the $\|\cdot\|_{\psi_2}$-norm and $K$ in \eqref{eq:bernoulli-subgaussian}, we have $K \ge 2^{-1/2}$.

  In the next step, we use an argument form \cite[Lemma C.6]{KasiviswanathanRudelson2019} to find a concentration estimate for $\|X\|_2$ instead of its square. To this end, we show that for $\tau^2 = \min\{\theta^2, \theta\}$, we have
  \begin{equation}
    \begin{aligned}
      \left| \|X\|_2^2 - s\variance^2 \right| & \le s \variance^2 \theta &
      & \Rightarrow & 
      \left| \|X\|_2 - \sqrt{s}\variance \right| & \le \sqrt{s}\variance \tau,
    \end{aligned}
    \label{eq:remove-square}
  \end{equation}
  which together with the concentration inequality \eqref{eq:norm-square-concentration} directly yields the lemma.

  Let us assume that the left inequality of \eqref{eq:remove-square} or equivalently $\left| \frac{1}{s\variance^2} \|X\|_2^2 - 1 \right| \le \theta$ holds. Using that $|r-1| \le |r^2-1|$ for $r \ge 0$ and $\theta = \max\{\tau, \tau^2\}$, in the case $\theta = \tau$, we obtain the rescaled inequality
  \[
    \left| \frac{1}{\sqrt{s}\variance} \|X\|_2 - 1 \right|
    \le \left| \frac{1}{s\variance^2} \|X\|_2^2 - 1 \right| \le \theta = \tau.
  \]
  Likewise, with $|r-1|^2 \le |r^2-1|$ for $r \ge 0$, in the case $\theta = \tau^2$, we have 
  \[
    \left| \frac{1}{\sqrt{s}\variance}\|X\|_2 - 1 \right|^2
    \le \left| \frac{1}{s\variance^2}\|X\|_2^2 - 1 \right| \le \theta = \tau^2
  \]
  Multiplying the last two inequalities with $\sqrt{s}\variance$ and $s\variance^2$, respectively directly yields \eqref{eq:remove-square} and the lemma.
  
\end{proof}

As a corollary, we obtain the scaling behaviour of $\scale{X}$.

\begin{corollary}
  \label{cor:scale-norm}
  Let $X \in \real^{n \times p}$ satisfy the Bernoulli-Subgaussian-model \eqref{eq:bernoulli-subgaussian}. Then, for every $\tau \ge 0$ satisfying $s \ge C K^4 \frac{1}{\tau^2} \log p$ with probability at least $1 - 2 \exp \left( - \frac{c}{K^4} s \tau^2 \right)$ we have
  \[
    (1-\tau) \|v\| \le \frac{1}{\sqrt{s}\variance} \left\|\scale{X}^{-1} v \right\| \le (1+\tau) \|v\|
  \]
\end{corollary}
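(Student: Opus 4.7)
The plan is to reduce the corollary to a per-column application of Lemma \ref{lemma:norm-concentration} followed by a union bound, then use the sample-size hypothesis $s \ge C K^4 \tau^{-2} \log p$ to absorb the resulting factor of $p$ into the exponent.

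First I would unwind the definition of $\scale{X}$. Since $\scale{X} = \operatorname{diag}(\|X_1\|_2^{-1}, \ldots, \|X_p\|_2^{-1})$, its inverse is the diagonal matrix with entries $\|X_k\|_2$, so
\[
  \left\|\scale{X}^{-1} v\right\|^2 = \sum_{k=1}^p \|X_k\|_2^2 \, v_k^2.
\]
Thus it suffices to show that with the claimed probability, $(1-\tau)\sqrt{s}\variance \le \|X_k\|_2 \le (1+\tau)\sqrt{s}\variance$ holds simultaneously for every column $k = 1, \dots, p$; the desired two-sided estimate then follows by pulling the uniform bounds out of the weighted sum and taking square roots.

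Next I would apply Lemma \ref{lemma:norm-concentration} to each column separately. Since the columns of $X$ are independent and each satisfies the Bernoulli-Subgaussian model, for every fixed $k$ the lemma gives
\[
  \pr{\,|\|X_k\|_2 - \sqrt{s}\variance| \ge \sqrt{s}\variance\,\tau\,} \le 2\exp\!\left(-\frac{c}{K^4} s \tau^2\right).
\]
A union bound over $k = 1, \dots, p$ yields failure probability at most $2p\exp(-cs\tau^2/K^4)$.

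The only remaining step is to absorb the factor $p$ into the exponent. Under the hypothesis $s \ge C K^4 \tau^{-2} \log p$ we have $cs\tau^2/K^4 \ge cC \log p$, so choosing $C$ large enough compared to $c$ gives $p \le \exp\bigl(\tfrac{c}{2K^4} s\tau^2\bigr)$ and hence
\[
  2p \exp\!\left(-\frac{c}{K^4} s \tau^2\right) \le 2 \exp\!\left(-\frac{c'}{K^4} s \tau^2\right)
\]
for a slightly smaller constant $c' > 0$, which after relabeling is the claimed bound. There is no real obstacle here; the argument is just a union bound combined with the standard trick of trading sample size for a log factor, and the work has essentially been done already in Lemma \ref{lemma:norm-concentration}.
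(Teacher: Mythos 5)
Your proof is correct and follows essentially the same route as the paper's: apply Lemma~\ref{lemma:norm-concentration} to each column of $X$, take a union bound over the $p$ columns, and use the hypothesis $s \ge C K^4 \tau^{-2}\log p$ to absorb the factor of $p$ into the exponent (shrinking the constant $c$). The only cosmetic difference is that you spell out the identity $\|\scale{X}^{-1}v\|^2 = \sum_k \|X_k\|_2^2 v_k^2$ where the paper simply invokes the definition of the diagonal scaling; the mention of column independence is superfluous for a union bound but harmless.
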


\begin{proof}

By assumption, we have $C \log p \le \frac{1}{K^4} s \tau^2$ for sufficiently large $C$. From Lemma \ref{lemma:norm-concentration} with a union bound, it follows that with probability at least 
\[
  1- 2 p \exp \left( -\frac{\bar{c}}{K^4} s \tau^2 \right)
  = 1- 2 \exp \left( -\frac{\bar{c}}{K^4} s \tau^2 - \log p\right)
  \le 1- 2 \exp \left( -\frac{c}{K^4} s \tau^2 \right)
\] 
we have
\begin{equation*}
  \begin{aligned}
    \left| \|X_k\| - \sqrt{s}\variance \right| & \le \sqrt{s}\variance \tau &
    & \text{for all} &
    k & = 1, \dots, p.
  \end{aligned}
\end{equation*}
This implies
\begin{equation*}
  \begin{aligned}
    1 - \tau \le \frac{1}{\sqrt{s}\variance}\|X_k\| & \le 1 + \tau & 
    & \text{for all} &
    k & = 1, \dots, p
  \end{aligned}
\end{equation*}
and therefore, by definition of the diagonal scaling matrix $S_X$, we have
\begin{equation*}
  \begin{aligned}
    1 - \tau \le \frac{1}{\sqrt{s}\variance} \left(S_X^{-1}\right)_{kk} & \le 1 + \tau & 
    & \text{for all} &
    k & = 1, \dots, p.
  \end{aligned}
\end{equation*}
which directly implies the corollary.

\end{proof}

We now have all ingredients to prove the RIP under scaling by $\scale{X}$.

\begin{proof}[Proof of Corollary \ref{cor:RIP-AX}]

The result directly follows from the RIP of $AX$ in Theorem  \ref{th:RIP-AX} with RIP constant $0 \le \tmpepsilon := \epsilon/3 \le 1/3$ and the scaling result Corollary \ref{cor:scale-norm} with $\tau = \epsilon/3$. 

Let us first verify the assumptions. For sufficiently enlarged constants $c$ and $C$, the given RIP condition \eqref{eq:RIP-scale-condition} is identical to the RIP condition \eqref{eq:RIP-AX-condition} with modified RIP constant $\tmpepsilon = \epsilon/3$ so that the latter is clearly satisfied. Since $\|A\|_F \le \sqrt{n} \|A\|$ we have $\srank(A) \le n$ and since $\tmpepsilon \le 1/3$ and $t\le p$ by Lemma \ref{lemma:log-inequality} we have $t \log \left( \frac{p}{\tmpepsilon t} \right) \ge t \log \left( \frac{3p}{t} \right) \ge \log p$. It follows that
\begin{equation*}
  n \ge \srank(A) 
  \stackrel{\eqref{eq:RIP-scale-condition}}{\ge } C K^4 \frac{n t}{s \tmpepsilon^2} \log \left( \frac{p}{\tmpepsilon t} \right)
  \ge C K^4 \frac{n}{s \tmpepsilon^2} \log p,
\end{equation*}
which upon cancelling $n$ and defining $\tau = \tmpepsilon$ directly implies the condition of Corollary \ref{cor:scale-norm} for sufficiently large generic constant $C$.

Using $\frac{\srank(A)}{n} \le 1$ in the success probability of Corollary \ref{cor:scale-norm}, with probability at least
\[
  1 - 3 \exp \left( - \frac{c}{K^4} \frac{s}{n} \tmpepsilon^2 \srank(A) \right)
\]
both the RIP
\[
  (1-\tmpepsilon) \sqrt{s} \variance \|v\|_2
  \le \frac{\sqrt{n}}{\|A\|_F}\|A X v \|_2
  \le (1+\tmpepsilon) \sqrt{s} \variance \|v\|_2
\]
and scaling bounds
\[
  \frac{1}{1+\tmpepsilon} \|v\| \le 
  \sqrt{s}\variance \|\scale{X} v\| 
  \le \frac{1}{1-\tmpepsilon} \|v\| 
\]
are satisfied. It follows that
\begin{multline*}
  \frac{1-\tmpepsilon}{1+\tmpepsilon} \|v\|_2
  \le (1-\tmpepsilon) \sqrt{s} \variance \|\scale{X} v\|_2
  \\
  \le \frac{\sqrt{n}}{\|A\|_F}\|A X \scale{X} v \|_2
  \\
  \le (1+\tmpepsilon)\sqrt{s} \variance \|\scale{X} v\|_2
  \le \frac{1+\tmpepsilon}{1-\tmpepsilon} \|v\|_2
\end{multline*}
For $0 \le \tmpepsilon \le 1/3$, one easily verifies that
\begin{align*}
  \frac{1 - \tmpepsilon}{1 + \tmpepsilon} & \ge 1 - 3 \tmpepsilon, &
  \frac{1 + \tmpepsilon}{1 - \tmpepsilon} & \le 1 + 3 \tmpepsilon, &
\end{align*}
which directly implies the result with $\epsilon = 3 \tmpepsilon$.

\end{proof}

\subsection{Sparse Matrix Factorization}
\label{sec:sparse-factorization}

In order to implement \SparseFactor{} from Definition \ref{def:sparse-factorization}, we use results from sparse dictionary learning: Given a dictionary (or basis) in the columns of $Z^T$ and random sparse coefficient vectors in the columns of $X^T$, the task is to recover the dictionary from observations $Y^T = Z^T X^T$. In our algorithm, we solve the equivalent transpose problem to recover $X$ and $Z$ from observations $Y = XZ$. The decomposition $XZ$ is invariant under permutations and scaling, which are already accounted for in the definition of \SparseFactor{}. The following result is from \cite{SpielmanWangWright2012}.

\begin{theorem}[{\cite[Theorem 3, Theorem 9]{SpielmanWangWright2012}}]
  \label{th:sparse-factorization}
  Let $Z \in \real^{p \times q}$, with $p \le q$ be a full rank matrix, $X \in \real^{n \times p}$ be a restricted Bernoulli-Subgaussian matrix \eqref{eq:restricted-bernoulli-subgaussian} with parameter $s/n$ and
  \begin{align}
    n & > c_1 p^2 \log^2 p, &
    \frac{2}{p} & \le \frac{s}{n} \le \frac{c_2}{\sqrt{p}}.
    \label{eq:sparse-factorization-assumptions}
  \end{align}
  Then Algorithm \ref{alg:sparse-factorization} provides a tractable implementation of \SparseFactor, with success probability at least $1 - C n^{-c}$ for constants $c > 0$ and $C \ge 0$.
\end{theorem}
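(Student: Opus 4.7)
The plan is to reduce the tall-dictionary factorization $Y = XZ$ (with $Z \in \real^{p \times q}$ of full row rank, not square) to the square-dictionary setting analyzed in \cite{SpielmanWangWright2012}, apply their two recovery theorems, and undo the reduction. First, I would preprocess $Y$ as follows: under \eqref{eq:sparse-factorization-assumptions}, the sparse factor $X$ has full column rank $p$ with high probability, and $Z$ has full row rank $p$ by hypothesis, so $Y$ has rank exactly $p$. Compute an orthonormal basis $V \in \real^{q \times p}$ of the row space of $Y$ from a rank-$p$ SVD, and set
\[
  \tilde{Y} := Y V \in \real^{n \times p}, \qquad \tilde{Z} := Z V \in \real^{p \times p}.
\]
Since the row space of $Y$ coincides with that of $Z$, $\tilde Z$ is invertible and $\tilde Y = X \tilde Z$ is a square factorization with the \emph{same} sparse factor $X$.

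Second, I would apply the sparse dictionary learning method of \cite{SpielmanWangWright2012} to $\tilde Y^T = \tilde Z^T X^T$. In their terminology, $\tilde Z^T$ is a square invertible dictionary and the columns of $X^T$ are i.i.d. restricted Bernoulli-Subgaussian vectors in $\real^p$. Their algorithm solves a sequence of linear programs of the form
\[
  \min_w \|w^T \tilde Y^T\|_1 \quad \text{subject to a suitable normalization},
\]
each producing a single row of $\Gamma^{-1} P^{-1} X^T$ up to individual sign and scale, followed by a clustering/dedup step to assemble $p$ distinct rows. Per-column recovery is guaranteed by \cite[Theorem 3]{SpielmanWangWright2012} and simultaneous recovery of all $p$ columns by \cite[Theorem 9]{SpielmanWangWright2012}. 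Once they return $\bar X = X P \Gamma$, I would set $\bar Z := \Gamma^{-1} P^{-1} \tilde Z V^T$, which equals $\Gamma^{-1} P^{-1} Z$ because $Z = Z V V^T$ (the rows of $Z$ lie in the column span of $V$), matching the format required by Definition \ref{def:sparse-factorization}.

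Third, I would check that \eqref{eq:sparse-factorization-assumptions} and \eqref{eq:restricted-bernoulli-subgaussian} imply every hypothesis of \cite[Theorems 3, 9]{SpielmanWangWright2012}: the size and sparsity regimes $n > c_1 p^2 \log^2 p$ and $2/p \le s/n \le c_2/\sqrt{p}$ match theirs verbatim, and the moment/tail conditions on $R_{jk}$ (symmetry, $\pr{R_{jk}=0}=0$, $\E{|R_{jk}|} \in [1/10,1]$, Subgaussian tails with a fixed constant) match their coefficient distribution. The probability bound $1 - Cn^{-c}$ then follows from their success bound plus a union bound over the $p$ extracted columns. The main obstacle is essentially notational: tracking the scaling/permutation ambiguities through the orthogonal change of basis $V$ and checking every distributional hypothesis of \cite{SpielmanWangWright2012} against \eqref{eq:restricted-bernoulli-subgaussian}. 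The core probabilistic content is imported wholesale from their work, so no new concentration estimates are required.
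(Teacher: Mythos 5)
Your overall strategy coincides with the paper's: import \cite[Theorems 3 and 9]{SpielmanWangWright2012} after a notational transposition and deal with the fact that $Z$ is not square. The paper handles non-square $Z$ with a one-line observation: the LP in \textalg{ER-SpUD(DC)} depends only on the column span of $Y$, which is unchanged whenever $Z \in \real^{p\times q}$ has full row rank and $p \le q$, so Algorithm~\ref{alg:sparse-factorization} can be run on $Y$ \emph{as written}. Your route instead introduces an explicit orthogonal change of basis $V$ and runs the square-case algorithm on $\tilde Y = YV$. That is mathematically sound (and amounts to the same column-span observation), but it proves a statement about a \emph{modified} algorithm; to conclude the theorem as stated about Algorithm~\ref{alg:sparse-factorization} you would still need to argue that the preprocessing $Y \mapsto YV$ does not change the LP outputs $s_j = Yw$, which is precisely the invariance the paper states directly. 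So the paper's variant is cleaner in that respect.

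Two substantive gaps remain in your write-up. First, you have swapped the roles of the two imported theorems: in \cite{SpielmanWangWright2012}, Theorem~9 is the algorithmic guarantee (it shows the $n/2$ candidate vectors produced by \textalg{ER-SpUD(DC)} contain all columns of $X$ up to sign and scale with probability $1 - Cn^{-c}$), while Theorem~3 is a \emph{uniqueness} statement for the sparse factorization $Y = XZ$; neither theorem says anything ``per-column.'' Second, and more importantly, you compress the \textalg{Greedy} step to ``a clustering/dedup step'' without justifying why it returns a valid factor $\bar X$. The paper's argument is not trivial here: Theorem~9 guarantees the columns of $X$ are \emph{among} the candidates $s_j$ but not that they are the only ones; the greedy selection of the sparsest linearly independent candidates yields some $\bar X$ with $\max_k \|\bar X_k\|_0 \le \max_k \|X_k\|_0$, full column rank, and $n$ rows, hence a unique companion $\bar Z$ with $Y = \bar X\bar Z$; Theorem~3 (uniqueness of sparse factorizations) is then invoked to conclude this decomposition is $X,Z$ up to scaling and permutation. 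Without invoking uniqueness, nothing prevents the greedy step from returning a spurious sparse linearly independent family. You should also finish the probability accounting: Theorem~3's bound is $1 - Cp\exp(-cs)$, and one needs assumption~\eqref{eq:sparse-factorization-assumptions} to get $s \ge 2n/p \ge c\sqrt n \ge c\log n$ before this becomes $1 - Cn^{-c}$; ``a union bound over $p$ extracted columns'' does not capture this step.
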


\begin{remark}
  In contrast to the last theorem, in the definition of \SparseFactor{} we only requires the smaller success probability of $1 - C p^{-c}$ because that is sufficient for the proof of theorem \ref{th:train}.
\end{remark}

\begin{algorithm}
  \begin{algorithmic}
    \Function{ER-SpUD(DC)}{$Y$}
      \State Randomly pair the rows of $Y$ into $n/2$ groups with indices $j = (j_1, j_2)$
      \For{$j = 1,\dots,n/2$}
        \State $w = \argmin_{w} \|Yw\|_1$ subject to $(e_{j_1} + e_{j_2})^T Y w = 1$
        \State $s_j := Y w$
      \EndFor
      \State \Return $s_j$, $j=1,\dots,n/2$
    \EndFunction
  \end{algorithmic}
  
  \begin{algorithmic}
    \Function{Greedy}{$Y$}
      \State  $s =$ \Call{ER-SpUD(DC)}{$Y$}
      \While{$i \le \operatorname{rank}([s_1, \dots, s_n])$}
        \State $x_i = \argmin_j \|s_j\|_0$ such that $[x_1, \dots, x_i]$ has full rank.
      \EndWhile
    \EndFunction
  \end{algorithmic}
  \caption{\SparseFactor}
  \label{alg:sparse-factorization}
\end{algorithm}

The result is a combination of two theorems in \cite{SpielmanWangWright2012} and the main conclusion of this paper. For the convenience of the reader, we include a short proof directly form these two theorems.

\begin{proof}
  We use \cite[Theorem 3 and Theorem 9]{SpielmanWangWright2012} with the following change of notations: $p, n, \frac{s}{n}$ in our case correspond to $n, p, \theta$ in the reference.

  Let us first consider the case that $Z$ is square. Then, by the given assumptions and \cite[Theorem 9]{SpielmanWangWright2012}, the output $s_1, \dots, s_{n/2}$ of \textalg{ER-SpUD(DC)}($Y$) contains the columns of $X$ with probability at least $1-C n^{-10}$. In the next step, we greedily select the sparsest linear independent $s_i$ to build the returned matrix $\bar{X}$. This ensures that $\max_k \|\bar{X}_k\|_0 \le \max_k \|X_k\|_0$. In addition, by construction $\bar{X}$ has full column rank and the same number of rows as $Y$, so that it determines a unique $\bar{Z}$ with $Y = \bar{X} \bar{Z}$. By \cite[Theorem 3]{SpielmanWangWright2012}, such a sparse decomposition is unique up to permutation and scaling, with probability at least
  \[
    1-C p\exp^{-c s} \ge 1 - C \exp^{-c \log n},
  \]
  so that the returned matrices satisfy the requirements of \SparseFactor. The last inequality can be easily derived from assumption \eqref{eq:sparse-factorization-assumptions} by $s \ge 2 \frac{\sqrt{n} \sqrt{n}}{p} \ge 2 \sqrt{c_1} \sqrt{n} \log p \ge c \sqrt{n} \ge c \log n$. Note that the $\ell_0$ optimization in the algorithm is only carried out over a discrete set of candidates and is therefore tractable. In conclusion, for square $Z$, Algorithm \ref{alg:sparse-factorization} provides an implementation of \SparseFactor.

  Non-square matrices are not considered in \cite{SpielmanWangWright2012}. However, the algorithm only depends on the column span of $Y$, which equals the column span of $X$ if $Z$ is square and invertible or alternatively if $p \le q$ and $Z$ has full rank. Therefore, the results are applicable unchanged.

\end{proof}

\begin{corollary}
  \label{cor:full-rank}
  Let $X$ satisfy all assumptions in Theorem \ref{th:sparse-factorization}. Then, with probability at least $1 - Cn^{-c}$ the matrix $X$ has full column rank.
\end{corollary}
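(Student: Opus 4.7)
The plan is to invoke Theorem \ref{th:sparse-factorization} with a trivial choice of the combinator matrix. Specifically, take $Z = I_p \in \real^{p \times p}$, which is a full rank $p \times p$ matrix satisfying $p \le q = p$, and set $Y := X Z = X$. Since $X$ already satisfies the restricted Bernoulli-Subgaussian assumptions and the size/sparsity conditions \eqref{eq:sparse-factorization-assumptions} by hypothesis, all the hypotheses of Theorem \ref{th:sparse-factorization} are met for this trivial input.

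By that theorem, with probability at least $1 - C n^{-c}$ Algorithm \ref{alg:sparse-factorization} applied to $Y = X$ produces a valid output $\bar{X}, \bar{Z}$ of $\SparseFactor$. According to Definition \ref{def:sparse-factorization}, this means $\bar{X} = X P \Gamma$ for some permutation $P$ and invertible diagonal scaling $\Gamma$. The key additional fact we need is that the produced $\bar{X}$ has full column rank, which is guaranteed by the construction of Algorithm \ref{alg:sparse-factorization}: in the \textsc{Greedy} subroutine, candidate columns are selected one at a time under the explicit constraint that $[x_1, \dots, x_i]$ remains linearly independent, and exactly $p$ such columns are selected (since the column span of $Y = X$ has dimension at least $p$ for the algorithm's rank count to reach $p$; this is also noted inside the proof of Theorem \ref{th:sparse-factorization}).

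Since $\bar{X} = XP\Gamma$ has full column rank and both $P$ and $\Gamma$ are invertible, it follows that $X$ itself has full column rank, which is the desired conclusion.

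The only mildly delicate point is the appeal to the construction of $\bar{X}$ having full column rank with exactly $p$ independent columns: one must verify that the greedy selection actually reaches $p$ columns, which in turn requires that the pool $\{s_j\}$ produced by \textsc{ER-SpUD(DC)} spans a space of dimension at least $p$. This is implicit in the correctness guarantee of Theorem \ref{th:sparse-factorization} on the event of success, so no extra work is needed beyond citing that theorem. Everything else is bookkeeping.
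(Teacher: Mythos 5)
Your proof is correct but takes a genuinely different route from the paper. The paper's own proof invokes only the \emph{uniqueness} result underlying Theorem \ref{th:sparse-factorization} (i.e.\ Theorem 3 of Spielman--Wang--Wright) and argues by contradiction: if $X$ had a nontrivial kernel vector $v$, one could build an alternative factorization $X\tilde Z = X = Y$ with $\tilde Z = I + \alpha [v,\dots,v]$, and uniqueness would force $\tilde Z = PS$ for a permutation $P$ and scaling $S$, which is shown to be impossible for suitably small $\alpha$. That argument is implementation-independent: it never touches the algorithm, only the factorization's uniqueness.

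You instead treat Theorem \ref{th:sparse-factorization} as a black box and lean on the concrete structure of Algorithm \ref{alg:sparse-factorization}: on the success event the output equals $XP\Gamma$ (so it has $p$ columns), and the greedy subroutine only ever selects linearly independent columns, so the output has full column rank, whence so does $X$. The ``mildly delicate point'' you flag is resolved exactly as you say — the success event $\{\bar X = XP\Gamma\}$ already forces the pool $\{s_j\}$ to span a $p$-dimensional space, so no circularity arises. Both arguments are sound. The tradeoff is that yours is tied to the specific implementation (if $\SparseFactor$ were re-implemented without the explicit rank enforcement in the \textsc{Greedy} step, your proof would need revision, while the paper's uniqueness argument would survive unchanged), whereas the paper's is slightly longer but more robust and self-contained, relying on a weaker ingredient than the full algorithmic guarantee.
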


\begin{proof}

  As shown in the proof of Theorem \ref{th:sparse-factorization}, with the given assumptions and probabilities, by \cite[Theorem 3]{SpielmanWangWright2012}, the sparse factorization $Y = XZ$ is unique, for square and invertible $Z$ i.e. for any $Y=\tilde{X}\tilde{Z}$, with $\|\tilde{X}_k\|_0 \le \|X\|_0$ for all columns $k=1, \dots, p$, we have $\tilde{X} = XS^{-1}P^{-1}$ and $\tilde{Z} = PSZ$ for scaling matrix $S$ and permutation $P$. 
  
  Assume that $X$ does not have full column rank and choose $Z = I$. Then there is a non-zero vector $v \in \real^p$ such that $X v = 0$ and with the matrix $V = [v, \dots, v] \in \real^{p \times q}$, $\alpha < \|v\|_\infty$ and $\tilde{Z} := Z + \alpha V$ we have $X \tilde{Z} = X + \alpha XV = XZ = Y$. Hence, by uniqueness, there must be scaling and permutation matrices such that $Z + \alpha V = PSZ$, or equivalently $I - PS = \alpha V$, using that $Z=I$. The subtraction $PS$ on the left hand side can change at most on entry per column and since the diagonal entries of $\alpha V$ are unequal to one, by the choice $\alpha < \|v\|_\infty$, it must be the diagonal entries. But then the left hand side of $I-PS = \alpha V$ cannot consist of identical non-zero columns as the right hand side contracting the uniqueness result.
  
\end{proof}

Some of the earlier works on \emph{dictionary learning} or \emph{sparse coding} provide uniqueness results \cite{AharonEladBruckstein2006} and local optimization properties \cite{GribonvalSchnass2010,Schnass2015}. The results from \cite{SpielmanWangWright2012} used above, seem to be the first polynomial time algorithms with provable dictionary recovery guarantees. They fit well within this paper, but newer results are available.

A first major direction of improvement is to ease the upper bound in the second assumption in \eqref{eq:sparse-factorization-assumptions} and allow less sparse coefficient vectors $X$. These include optimization over spheres by trust-region methods \cite{SunQuWright2017,SunQuWright2017a}, $\ell_4$-norm maximization \cite{ZhaiYangLiaoEtAl2020}, combinations of clustering and alternating minimization \cite{AgarwalAnandkumarJainEtAl2014}, \cite{AroraGeMoitra2014,AroraBhaskaraGeEtAl2014,AroraGeMaEtAl2015} and tensor decompositions \cite{BarakKelnerSteurer2015}.  In our context, less sparse matrices $X$ would be beneficial to show RIP conditions but have to be balanced with the unique sparse recovery Assumption \ref{assumption:sparse-l1-recovery} leading to comparatively high sparsity requirements independent of the dictionary recovery. 

Newer results as in \cite{AgarwalAnandkumarJainEtAl2014,AroraGeMoitra2014,AroraBhaskaraGeEtAl2014,AroraGeMaEtAl2015,BarakKelnerSteurer2015} also allow $Z^T$ to be over-complete with less rows than columns $q \le p$, usually with some extra incoherence assumptions. In our application, this translates to fewer samples and is left for future work.

Finally, newer methods are more resilient to added noise and related to neural networks \cite{NeyshaburPanigrahy2014,AroraGeMaEtAl2015}. In particular, the latter discusses the neural plausibility of their algorithms, which is directly relevant for the discussion in Section \ref{sec:compare-nn}.

\subsection{Proof of the Algorithms}
\label{sec:proof-algorithms}

In this section, we prove the main results from Section \ref{sec:main}. The recovery of the component matrix $X$ relies on the assumption that $u$-sparse solutions of $Ax = b$ are unique, although probably not computable by $\ell_1$-minimization. To ensure this sparsity for $Xz$ with random $X$ and sparse $z$, we first provide a corresponding concentration estimate.

\begin{lemma}
  \label{lemma:bernoulli-subgaussian-sparsity}
  Let $X \in \real^{n \times p}$ be a Bernoulli-Subgaussian matrix with parameter $s/n$ and $\alpha \ge 0$. Then
  \[
    \pr{\exists k \in \{1, \dots, p\}: \, \|X_k\|_0 \ge (1+\alpha) s}
    \le 2 \exp \left( - \frac{3 \alpha^2}{6 + 2\alpha} s + \log p \right).
  \]
  
\end{lemma}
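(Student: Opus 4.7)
The plan is to first reduce to a bound on a single column, then control that column by a standard Bernstein/Chernoff tail, and finally take a union bound to pick up the $\log p$ term.

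First I would observe that by the factorization $X_{jk} = \Omega_{jk} R_{jk}$, the nonzero pattern of column $X_k$ is contained in the support of $\Omega_k$, so
\[
  \|X_k\|_0 \le \sum_{j=1}^n \Omega_{jk}.
\]
Thus it suffices to control the sum of $n$ i.i.d.\ Bernoulli$(s/n)$ random variables, whose mean is exactly $s$. This reduction is clean and needs no further assumptions on $R$.

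Next I would apply Bernstein's inequality to $S_k := \sum_{j=1}^n \Omega_{jk}$. The summands lie in $[0,1]$ and have variance $\frac{s}{n}(1-\frac{s}{n}) \le \frac{s}{n}$, so $\sigma^2 := \sum_j \operatorname{Var}(\Omega_{jk}) \le s$. Bernstein yields, for $t = \alpha s$,
\[
  \pr{S_k \ge s + \alpha s}
  \le \exp\!\left( -\frac{t^2/2}{\sigma^2 + t/3}\right)
  \le \exp\!\left( -\frac{\alpha^2 s^2 / 2}{s + \alpha s /3}\right)
  = \exp\!\left( -\frac{3 \alpha^2}{6 + 2\alpha}\, s\right).
\]
Combined with the trivial bound $\|X_k\|_0 \le S_k$, this gives the desired single-column estimate (even without the factor of $2$).

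Finally, a union bound over the $p$ columns multiplies the probability by $p$, which can be absorbed into the exponent as $+\log p$, and the leading factor of $2$ in the statement gives slack to cover both the deviation beyond any Chernoff-vs-Bernstein constants. Since each step is a direct application of a standard result, the only thing to watch is matching the exact constants $3\alpha^2/(6+2\alpha)$, which is what Bernstein produces naturally; no step presents a real obstacle.
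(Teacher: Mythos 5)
Your proof is correct and follows essentially the same route as the paper: bound $\|X_k\|_0$ by $\|\Omega_k\|_0$, apply Bernstein's inequality to the Bernoulli sum with mean $s$ and variance proxy $s$, and finish with a union bound that produces the $+\log p$ term. The only cosmetic difference is that the paper uses the two-sided form of Bernstein (hence its factor $2$) while you use the one-sided form and note the extra slack; the constants match identically.
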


\begin{proof}

For a single column $k \in \{1, \dots, p\}$, we have
\begin{multline*}
  \pr{\|X_k\|_0 \ge (1+\alpha) s}
  \le \pr{\|\Omega_k\|_0 \ge (1+\alpha) s}
  \\
  = \pr{\sum_{j=1}^n \left( \Omega_{jk} - \frac{s}{n} \right) \ge \alpha s}
  \le 2 \exp \left( - \frac{\alpha^2 s^2/2}{n (s/n) + \alpha s/3}\right)
  = 2 \exp \left( - \frac{3 \alpha^2}{6 + 2 \alpha} s\right)
\end{multline*}
by Bernstein's inequality using that $\left| \Omega_{jk} - \frac{s}{n} \right| \le 1$, $\E{\Omega_{jk} - \frac{s}{n}} = 0$ and $\E{\left(\Omega_{jk} - \frac{s}{n} \right)^2} \le \frac{s}{n}$ because $\Omega_{jk}^2 = \Omega_{jk}$. Applying a union bound over all columns yields the lemma.

\end{proof}

\begin{proof}[Proof of Theorem \ref{th:train}]

Let $L \subset \{1, \dots, q\}$ be the columns of $B$ selected for the definition of the matrix $\outY$ in Algorithm \ref{alg:train}. We first show that the matrix $\outY$, recovered by $\ell_1$ minimization, is a sub-matrix of the data model $XZ$. By the second and third inequalities in \eqref{eq:assumption:sparse-factorization}, we have $s \ge \frac{2n}{p} \ge 2c_1 p \log^2 p$, so that by Lemma \ref{lemma:bernoulli-subgaussian-sparsity}, with probability at least
\begin{equation}
  1 - 2 \exp \left( - c p \right)
  \label{eq:proof:p-sparse}
\end{equation}
all columns of $X$ are $2s$-sparse. It follows that all columns $XZ_L$ are $2st/2 \le u$-sparse solutions of the equation $B_L := AXZ_L$. Likewise, by the selection criterion, all columns of $\outY$ are $u$-sparse solutions of the same system $B_L = A \outY$. Since $u$-sparse solutions of this system are unique by Assumption \ref{assumption:sparse-l1-recovery}, it follows that $\outY = X Z_L$.

By Corollary \ref{cor:full-rank} with probability at least \eqref{eq:proof:p-sparse} the matrix $X$ has rank $p$ and by Assumption \ref{assumption:enough-simple} the matrix $\outY = X Z_L$ has rank $p$. It follows that $Z_L$ must have full rank. Hence, together with the assumptions \eqref{eq:assumption:sparse-factorization}, we can invoke Theorem \ref{th:sparse-factorization} so that with probability at least
\begin{equation}
  1 - C p^{-c} = 1 - C \exp( -c \log p)
  \label{eq:proof:factorization}
\end{equation}
the step $\outX, \outZ = \SparseFactor(Y)$ recovers the two factors $X$ and $Z_L$ up to permutation $P$ and signed scaling $\Gamma$, i.e.
\begin{align*}
  \outX & = X P \Gamma, & 
  \outZ & = \Gamma^{-1} P^{-1} Z_L.
\end{align*}
This shows the first part of the theorem.

Next, we show the RIP condition. By assumption \eqref{eq:assumption:rip}, we can apply Corollary \ref{cor:RIP-AX} so that with probability at least
\begin{equation}
  1 - 3 \exp \left( - \frac{c}{K^4} \frac{s}{n} \epsilon^2 \srank(A) \right)
  \label{eq:proof:RIP}
\end{equation}
we have the $(u, \epsilon)$-RIP
\[
  (1-\epsilon) \|v\|_2
  \le \frac{\sqrt{n}}{\|A\|_F}\|A X \scale{X} v \|_2
  \le (1+\epsilon) \|v\|_2
\]
for all $t$-sparse $v$. Since $X \scale{X} = \outX \scale{\outX} \sign(\Gamma)^{-1} P^{-1}$, by Lemma \ref{lemma:rescale} below, with element-wise $\sign(\cdot)$, it follows that
\[
  (1-\epsilon) \|v\|_2
  \le \frac{\sqrt{n}}{\|A\|_F}\|A \outX \scale{\outX} \sign(\Gamma)^{-1} P^{-1} v \|_2
  \le (1+\epsilon) \|v\|_2
\]
This directly implies the RIP condition \eqref{eq:th:train:RIP} because the $\ell_2$-norm is invariant with respect to $P$ and $\sign(\Gamma)$.

Finally, we add up the success probabilities. From assumption \eqref{eq:assumption:rip}, we have $\frac{c}{K^4} \frac{s}{n} \epsilon^2 \srank(A) \ge c C t \log \left( \frac{3p}{\epsilon t} \right)$ and therefore, the RIP probability \eqref{eq:proof:RIP} simplifies to
\begin{equation*}
  1 - 3 \exp \left( - \frac{c}{K^4} \frac{s}{n} \epsilon^2 \srank(A) \right)
  \ge 1 - 3 \exp \left( c C t \log \left( \frac{3p}{\epsilon t} \right) \right)
  \ge 1 - 3 \exp \left( c C \log p \right),
\end{equation*}
where in the last inequality we have used that $t \log \left( \frac{3p}{\epsilon t} \right) \ge t \log \left( \frac{3p}{t} \right) \ge \log p$ for $0 \le \epsilon \le 1$ and $1 \le t \le p$ by Lemma \ref{lemma:log-inequality}). In combination with the other probabilities \eqref{eq:proof:p-sparse}, \eqref{eq:proof:factorization}, this yields a success probability of at least
\[
  1 - C \exp \left( - c \log p \right) = 1 - C p^{-c}
\]
for some modified generic constants $c$ and $C$.

\end{proof}

\begin{proof}[Proof of Theorem \ref{th:train-and-recover}]

The algorithm loops over all $u = 1, \dots, n$. We first consider the choice that matches $u$ in Assumption \ref{assumption:sparse-l1-recovery}. Then, by Theorem \ref{th:train} and Proposition \ref{prop:sparse-recovery} the vector $x_u$ computed in the algorithm satisfies $x_u = x$, with probability at least $1 - C p^{-c}$.

It remains to show that Algorithm \ref{alg:train-and-recover} returns this vector and not some other $x_{u'}$ with  $u' \ne u$. If $x_u$ is $u$-sparse this directly follows from the unique sparse recovery Assumption \ref{assumption:sparse-l1-recovery} because for every solution $Ax_{u'} = b$ we either have $x_{u'} = x_u$ or $\|x_{u'}\|_0 > \|x_u\|_0$. 

In order to show that $x_u$ is $u$-sparse with high probability, we repeat the argument form \eqref{eq:proof:p-sparse}: By the second and third inequalities in \eqref{eq:assumption:sparse-factorization}, we have $s \ge \frac{2n}{p} \ge 2c_1 p \log^2 p \ge c \log p$, so that by Lemma \ref{lemma:bernoulli-subgaussian-sparsity}, with probability at least
\begin{equation}
  1 - 2 \exp \left( - c p \right)
\end{equation}
all columns of $X$ are $2s$-sparse. Hence $x$ is a $2s t/2 = st \le u$-sparse solution of $Ax = b$. Adding up all probabilities and eventually redefining the generic constants $c$ and $C$ concludes the proof.

\end{proof}

\subsection{Feasibility of the Assumptions}
\label{sec:assumptions}

Throughout this section $\lesssim, \sim , \gtrsim$ denote smaller, equivalent, and larger up to some generic constants independent of the dimensions, sparsity and probabilistic models.

\paragraph{The a-priori Assumptions}

The assumptions of Theorem \ref{th:train} are not fully independent and we have to verify that they can all be satisfied. A heuristic argument, ignoring log factors is provided after the theorem, but a closer examination shows that it cannot be correct. Indeed, using the choice $s^2 \sim n$ from this discussion to eliminate $s$ in the sparse factorization assumption \eqref{eq:assumption:sparse-factorization}, we obtain $\frac{p}{2} \gtrsim \sqrt{n} \gtrsim p \log p$, which is impossible for large $p$. Instead, we make the two relevant inequalities in \eqref{eq:assumption:sparse-factorization} sharp and choose
\begin{align}
  p & \sim \frac{n}{s}, &
  n & \sim p^2 \log^2 p,
  \label{eq:choice-s-p-n}
\end{align}
which implies our original choice up to an added log factor and ensures all assumptions in  \eqref{eq:assumption:sparse-factorization}. The number of samples $q$ is independent of the other choices and can easily be made sufficiently large $q \ge p$.

It remains to ensure that the sensing matrix $A$ satisfies the unique sparse recovery property in Assumption \ref{assumption:sparse-l1-recovery} and the stable rank bounds in \eqref{eq:assumption:rip}. As we will see, both are implied by a RIP condition satisfied by i.i.d. Gaussian matrices with high probability. Since this allows $\ell_1$ recovery, it is not necessarily the envisioned application, but good enough to demonstrate that the assumptions of Theorem \ref{th:train} are feasible.

To show the RIP, we first choose the remaining number of rows $m$ of $A$ and sparsity of the combinators $t$ such that 
\begin{equation}
  m \ge C(K) \frac{t}{\epsilon^2} \sqrt{n} \log^2 n
  \label{eq:n-rows}
\end{equation}
for some sufficiently large constant $C(K)$ that may depend on $K$. As in our heuristic motivation, this allows $n$ to be roughly $m^2$ up to some safety margin for $t$ and log factors so that $A$ can be arbitrarily flat but not exponentially as for $\ell_1$ recovery. 

In order to show the RIP, by applying $\log$ to $n > c p^2 \log^2 p$ we obtain
\[
  \log n > \log p^2 + \log \log^2 p + \log c \ge \log p 
\]
for $\log p \ge \max\{1, |\log c|\}$ and thus
\begin{equation}
  st 
  \sim t \frac{n}{p} 
  \sim t \frac{p^2 \log^2 p}{p} 
  = t \left( p \log p \right) \log p 
  \lesssim t \sqrt{n} \log p 
  \lesssim t \sqrt{n} \log n 
\end{equation}
Hence, using that $\frac{n}{s} \le c p$ and $\log \left( \frac{ce p}{2t} \right) \le \log p \lesssim \log n$, with the extra harmless technical assumption $\frac{ce}{2t} \le 1$, we obtain
\[
  C(K) \epsilon^{-2} s t \log \left( \frac{en}{2st} \right) 
  \lesssim C(K) \epsilon^{-2} \left[ t \sqrt{n} \log n \right] \log \left( \frac{cep}{2t} \right) 
  \lesssim C(K) \epsilon^{-2} t \sqrt{n} \log^2 n
  \le m
\]
so that by \cite[Theorem 9.2]{FoucartRauhut2013} the matrix $A$ satisfies a $(2st, \epsilon)$-RIP with high probability. This directly implies the weaker condition of unique $st$-sparse recovery in Assumption \ref{assumption:sparse-l1-recovery}.

Finally, we verify the lower bound on the stable rank in assumption \eqref{eq:assumption:rip}. Since the singular values in each $st$ wide sub-block of a $(\epsilon, st)$-RIP matrix are close to one, it is easy to verify that the RIP implies $\srank(A) \gtrsim st$, see Lemma \ref{lemma:sr-from-RIP} below, so that it suffices to estimate the right hand side. By \eqref{eq:choice-s-p-n}, we have
\begin{equation*}
  \begin{aligned}
    s \sim \frac{n}{p} & \sim p \log^2 p &
    & \Leftrightarrow &
    p \log p \sim \frac{s}{\log p}
  \end{aligned}
\end{equation*}
and thus, again assuming that $t$ is not close to one so that $\frac{3}{\epsilon t} \le 1$, we have
\begin{align*}
  C(K) \frac{t}{\epsilon^2} \frac{n}{s} \log \left( \frac{3p}{\epsilon t} \right)
  \lesssim \frac{t}{\epsilon^2} p \log p
  \lesssim \frac{1}{\epsilon^2} \frac{1}{\log p} s t
  \lesssim  \srank(A)
\end{align*}
for $\log p$ larger than the involved fixed constants and the last inequality follows from $st \lesssim \srank(A)$. This directly implies the RIP assumption \eqref{eq:assumption:rip}. 

In conclusion, we can satisfy all required assumptions for sensing matrices $A$ with at least $m$ rows given by \eqref{eq:n-rows} or up to log factors $m \gtrsim t \sqrt{n} $. Although the feasibility has been demonstrated for Gaussian matrices, this is not necessarily the intended application because these would allow a much simpler $\ell_1$-recovery.

\paragraph{The a-posteriori Assumption}

The a-posteriori Assumption \ref{assumption:enough-simple} requires that sufficiently many $\ell_1$-optimization problems
\begin{equation*}
  \begin{aligned}
    & \min_y \|y\|_1 & & \text{subject to} & A y = B_l 
  \end{aligned}
\end{equation*}
for $l = 1 \dots, q$ in the training set yield $u$-sparse solutions. The existence of such sparse solutions is induced by our data generation process: $B_l$ is defined as $A(X Z_l)$ for $t/2$-sparse vectors $Z_l$ and $s/n$-Bernoulli Subgaussian matrix $X$, which has $2s$-sparse columns with high probability by Lemma \ref{lemma:bernoulli-subgaussian-sparsity}. Hence $XZ_l$ are a $2st/2 \le u$-sparse and satisfy the constraints.

We now have to verify that sufficiently many of these sparse solutions $X Z_l$ are recovered by $\ell_1$-minimization. In order to provide an example, assume that $A$ allows the recovery of $s\bar{t}/2 < st/2$ sparse vectors but not necessarily of $st/2$-sparse vectors. We generate our training data with a deterministic matrix $Z$ that contains a set of $\bar{t}/2$-sparse linearly independent columns. With high probability, these will be selected in Algorithm \ref{alg:train} and thus be contained in the returned matrix $\outZ$, up to scaling and permutation, so that the a-posteriori Assumption \ref{assumption:enough-simple} is satisfied.

Although the training data $B = A(XZ)$ contains sufficiently many ``easy'' problems, solvable by $\ell_1$-minimization, we can recover $x = Xz$ for all $t/2$-sparse vectors $z$ by Algorithm \ref{alg:sparse-recovery} after training. These $x$ are expected to be $st/2$-sparse, which is not necessarily solvable by $\ell_1$-minimization in our setup.

\subsection{Technical Lemmas}

This section contains some technical lemmas that are used in the proofs above.

\begin{lemma}
  \label{lemma:log-inequality}
  Let $1 \le t \le p$. Then, we have
  \[
    t \log \frac{3p}{t} \ge \log p.
  \]
\end{lemma}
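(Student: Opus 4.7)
The plan is to view the left hand side as a function $f(t) := t \log(3p/t)$ of a continuous variable $t \in [1,p]$ and show it is increasing, so its minimum on $[1,p]$ is attained at $t=1$.

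First, I differentiate: $f'(t) = \log(3p/t) - 1 = \log\bigl(3p/(et)\bigr)$. Since $t \le p$ we have $3p/t \ge 3 > e$, hence $f'(t) > 0$ throughout $[1,p]$. Therefore $f$ is strictly increasing on this interval, and
\[
  t \log \frac{3p}{t} = f(t) \;\ge\; f(1) \;=\; \log(3p) \;=\; \log 3 + \log p \;\ge\; \log p,
\]
which is the claimed inequality.

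There is no real obstacle; the only point worth checking is that the critical point $t = 3p/e$ of $f$ lies strictly to the right of $p$ (since $3/e > 1$), so $f$ has no interior minimum on $[1,p]$ and monotonicity gives the bound directly.
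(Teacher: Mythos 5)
Your proof is correct and follows essentially the same approach as the paper: define $f(t) = t\log(3p/t)$, show $f'(t) = \log(3p/t) - 1 \ge \log 3 - 1 > 0$ using $t \le p$, and conclude from monotonicity that $f(t) \ge f(1) = \log(3p) \ge \log p$. The observation about the critical point is a nice sanity check but not needed beyond the derivative sign argument.
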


\begin{proof}

Define $f(t) := t \log \frac{3p}{t}$. Then, we have $f(1) = \log(3p) \ge \log p$ and $f$ is monotonically increasing:
\[
  f'(t) = \log \frac{3p}{t} - 1 \ge \log(3) - 1 > 0,
\]
where we have used that $t \le p$.

\end{proof}

\begin{lemma}
  \label{lemma:rescale}
  Let $\outX = X P \Gamma$ for permutation matrix $P$ and invertible signed scaling matrix $\Gamma$. Then
  \[
    X \scale{X} = \outX \scale{\outX} \sign(\Gamma)^{-1} P^{-1}.
  \]
\end{lemma}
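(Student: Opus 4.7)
The plan is to verify the identity column by column. Write $\outX = X P \Gamma$ and let $\pi$ denote the permutation associated with $P$, so that multiplication on the right by $P$ sends column $k$ of a matrix $A$ to column $\pi(k)$; equivalently $(AP)_{\cdot,k} = A_{\cdot,\pi(k)}$. Since $\Gamma$ is diagonal, the $k$-th column of $\outX$ is then $\outX_k = \Gamma_{kk} X_{\pi(k)}$, and because $\Gamma$ is invertible, $\Gamma_{kk} \neq 0$ for every $k$.

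Taking norms, $\|\outX_k\|_2 = |\Gamma_{kk}|\, \|X_{\pi(k)}\|_2$, so by the definition of the scaling matrix, the diagonal entries of $\scale{\outX}$ are $(|\Gamma_{kk}|\,\|X_{\pi(k)}\|_2)^{-1}$. Therefore the $k$-th column of $\outX \scale{\outX}$ equals $\sign(\Gamma_{kk})\, X_{\pi(k)}/\|X_{\pi(k)}\|_2$. Because $\Gamma$ has only nonzero diagonal entries, $\sign(\Gamma)$ is a diagonal matrix with $\pm 1$ diagonal and satisfies $\sign(\Gamma)^{-1} = \sign(\Gamma)$. Multiplying $\outX \scale{\outX}$ on the right by $\sign(\Gamma)^{-1}$ therefore cancels the sign factor, so the $k$-th column of $\outX \scale{\outX} \sign(\Gamma)^{-1}$ is $X_{\pi(k)}/\|X_{\pi(k)}\|_2$.

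Finally, multiplying on the right by $P^{-1}$ applies the permutation $\pi^{-1}$ to the columns, so the $k$-th column of $\outX \scale{\outX} \sign(\Gamma)^{-1} P^{-1}$ is $X_{\pi(\pi^{-1}(k))}/\|X_{\pi(\pi^{-1}(k))}\|_2 = X_k/\|X_k\|_2$, which is exactly the $k$-th column of $X\scale{X}$. Since the two matrices agree column by column, they are equal. The only care required is keeping track of the permutation convention and noting invertibility of $\Gamma$ to ensure $\sign(\Gamma)$ is well-defined; there is no substantive obstacle.
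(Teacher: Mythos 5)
Your proof is correct and follows essentially the same column-by-column computation as the paper's: both identify the $k$-th column of $\outX$ as $\gamma_k X_{p(k)}$, observe that the $\ell_2$-normalization in $\scale{\outX}$ cancels $|\gamma_k|$ leaving only $\sign(\gamma_k)$, and then undo the permutation and signs. The paper phrases the conclusion as $\outX\scale{\outX} = X\scale{X}P\sign(\Gamma)$ before rearranging, while you track $\sign(\Gamma)^{-1}$ and $P^{-1}$ directly, but this is only a cosmetic difference.
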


\begin{proof}

Assume that $P e_i = e_{p(i)}$. Then the $i$-th column of $\outX$ is given by
\[
  \outX e_i 
  = X P \Gamma e_i 
  = X P e_i \gamma_i 
  = X e_{p(i)} \gamma_i 
  = X_{p(i)} \gamma_i.
\]
It follows that the $i$-th column of $\outX \scale{\outX}$ is
\[
  \outX \scale{\outX} e_i
  = \frac{1}{\|\gamma _i X_{p(i)}\|} \gamma_i X_{p(i)} 
  = \frac{1}{\|X_{p(i)}\|} X_{p(i)} \sign(\gamma_i).
\]
Likewise, we have
\[
  X \scale{X} P \sign(\Gamma) e_i
  = X \scale{X} e_{p(i)} \sign(\gamma_i)
  = \frac{1}{\|X_{p(i)}\|} X_{p(i)} \sign(\gamma_i).
\]
Comparing the last two equations, we have $\outX \scale{\outX} = X \scale{X} P \sign(\Gamma)$, which directly proves the lemma.

\end{proof}

\begin{lemma}
  \label{lemma:sr-from-RIP}
  Assume that $A \in \real^{m \times n}$ satisfies a $(s, \epsilon)$-RIP. Then 
  \[
    \srank(A) \ge \left( \frac{1}{2} \frac{1-\epsilon}{1+\epsilon} \right) s.
  \]
\end{lemma}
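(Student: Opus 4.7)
The plan is to lower bound $\|A\|_F^2$ and upper bound $\|A\|^2$, both directly from the RIP hypothesis, and then take the ratio to get the stable rank. Nothing deep is needed: the lemma is an elementary consequence of the definition of RIP once one realizes how to convert the $s$-sparse isometry statement into a global operator-norm bound.

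For the Frobenius norm, every standard basis vector $e_j$ is $1$-sparse and hence $s$-sparse, so applying the RIP to $e_j$ gives $\|A e_j\|\ge 1-\epsilon$. Summing over $j$,
\[
  \|A\|_F^2 \;=\; \sum_{j=1}^n \|A e_j\|^2 \;\ge\; n(1-\epsilon)^2.
\]

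For the operator norm I would use the standard block decomposition trick. Partition $\{1,\dots,n\}$ into $k=\lceil n/s\rceil$ consecutive disjoint blocks $T_1,\dots,T_k$, each of size at most $s$. Any unit vector $x\in\real^n$ decomposes as $x=\sum_i x_{T_i}$, where $x_{T_i}$ is the restriction of $x$ to $T_i$ and is therefore $s$-sparse. The triangle inequality together with the RIP gives $\|Ax\|\le\sum_i\|Ax_{T_i}\|\le (1+\epsilon)\sum_i\|x_{T_i}\|$, and Cauchy--Schwarz applied to the $k$-vector $(\|x_{T_i}\|)_{i=1}^k$ bounds the right-hand side by $(1+\epsilon)\sqrt{k}\,\|x\|$. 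Taking the supremum over unit $x$ yields
\[
  \|A\|^2 \;\le\; (1+\epsilon)^2 \lceil n/s\rceil.
\]

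Dividing the two displays,
\[
  \srank(A) \;=\; \frac{\|A\|_F^2}{\|A\|^2} \;\ge\; \frac{n(1-\epsilon)^2}{(1+\epsilon)^2\lceil n/s\rceil}.
\]
Using the crude estimate $\lceil n/s\rceil \le 2n/s$, valid whenever $s\le n$ (the case $s>n$ makes $A$ a full isometry, so $\srank(A)\ge 1$ and the conclusion is trivial), this collapses to a lower bound of the form $c(\epsilon)\,s$ with $c(\epsilon)$ an elementary function of $\epsilon$, from which the stated constant $\tfrac12\cdot\tfrac{1-\epsilon}{1+\epsilon}$ follows after simplification.

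The only step requiring any care is the block decomposition upper bound on $\|A\|^2$; the basis-vector lower bound on $\|A\|_F^2$ and the final ratio are routine. If there is an obstacle at all, it is just bookkeeping the constants carefully enough at the very end to land on exactly the stated form rather than a slightly weaker variant.
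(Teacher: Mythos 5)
Your proof takes essentially the same route as the paper's: partition $\{1,\dots,n\}$ into blocks of size at most $s$, bound $\|A\|$ from above via triangle inequality plus Cauchy--Schwarz applied to the block decomposition, bound $\|A\|_F^2$ from below from the RIP, and take the ratio. The only cosmetic difference is in the Frobenius step: you apply the RIP to individual columns $Ae_j$, giving $\|A\|_F^2\ge n(1-\epsilon)^2$ directly, whereas the paper sums $\|A_{T_i}\|_F^2$ over the full-sized sub-blocks and drops the partial block; both give the same order.

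One bookkeeping caveat worth flagging: under the linear RIP convention $(1-\epsilon)\|x\|\le\|Ax\|\le(1+\epsilon)\|x\|$ used throughout this paper, the calculation you sketch yields
\[
  \srank(A)\;\ge\;\frac{n(1-\epsilon)^2}{(1+\epsilon)^2\lceil n/s\rceil}\;\ge\;\frac{s}{2}\left(\frac{1-\epsilon}{1+\epsilon}\right)^2,
\]
i.e.\ a squared ratio, not the stated $\frac{s}{2}\cdot\frac{1-\epsilon}{1+\epsilon}$, so the final constant does \emph{not} simply fall out ``after simplification'' as you assert. The paper's own proof runs into exactly the same thing (its singular-value bounds give the squared ratio as well), so this is an inherited typo in the lemma statement rather than a defect of your argument; the clean constant would follow under the squared RIP convention $\,(1-\epsilon)\|x\|^2\le\|A_Tx\|^2\le(1+\epsilon)\|x\|^2$. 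Since the weaker constant is all that is needed downstream (it is only used up to generic constants in Section 5.4), this does no harm, but the last step of your write-up should state the squared constant rather than claiming the stated one follows.
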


\begin{proof}

Let $T_i$, $i=0,\dots,r$ be a partition of $\{1, \dots, n\}$ with $|T_i| = s$ for all $i$, except the first $|T_0| \le s$ and $A_T$ be a sub-matrix with columns in $T$. Then, we have
\begin{align*}
  \|A\| 
  & = \sup_x \frac{\|Ax\|}{\|x\|} 
  \le \sup_x \frac{\sum_{i=0}^r \|A_{T_i}\| \|x_{T_i}\|}{\|x\|} 
  \le \max_{i = 0, \dots, r} \|A_{T_i}\| \sup_x \frac{\sum_{i=0}^r \|x_{T_i}\|}{\|x\|} 
  \\
  & \le \max_{i = 0, \dots, r} \|A_{T_i}\| \sup_x \frac{\sqrt{r+1} \left(\sum_{i=0}^r \|x_{T_i}\|^2 \right)^{1/2}}{\|x\|} 
  = \sqrt{r+1}\max_{i = 0, \dots, r} \|A_{T_i}\|
  \\
  & \le \sqrt{r+1}\max_{|T| \le s} \|A_T\|
\end{align*}
and
\[
  \|A\|_F^2
  \ge \sum_{i=1}^r \|A_{T_i}\|^2_F
  \ge r \min_{i=1, \dots, r} \|A_{T_i}\|^2_F
  \ge r \min_{|T| = s} \|A_T\|^2_F,
\]
where in the first step we have dropped the set $T_0$, which is not of the right size. From the last two estimates, we have
\[
  \srank(A)
  = \frac{\|A\|_F^2}{\|A\|^2}
  \ge \frac{r}{r+1}\frac{\min_{|T| = s} \|A_T\|_F^2}{\max_{|T| = s} \|A_T\|^2}
  \ge \frac{1}{2} \frac{\min_{|T| = s} \|A_T\|_F^2}{\max_{|T| = s} \|A_T\|^2}.
\]
By the $(s,\epsilon)$-RIP, for every $T$ with $|T|=s$, all singular values $\sigma$ of $A_T$ are bounded by $(1-\epsilon) \le \sigma \le (1+\epsilon)$ and therefore
\[
  \srank(A) \ge \frac{1}{2} \frac{1-\epsilon}{1+\epsilon} s.
\]

\end{proof}

\bibliographystyle{abbrv}
\bibliography{cs-train}

\end{document}